\documentclass{article}

\usepackage[preprint]{neurips_2026}

\usepackage[utf8]{inputenc} 
\usepackage[T1]{fontenc}    
\usepackage{hyperref}       
\usepackage{url}            
\usepackage{booktabs}       
\usepackage{tabularx}
\usepackage{rotating}
\usepackage{multirow}
\usepackage{amsfonts}       
\usepackage{nicefrac}       
\usepackage{microtype}      
\usepackage{xcolor}         

\usepackage{algorithm}
\usepackage{algpseudocode}
\usepackage{dsfont}
\usepackage{enumitem}

\usepackage{amsmath}
\usepackage{amssymb}
\usepackage{mathtools}
\usepackage{amsthm}

\theoremstyle{plain}
\newtheorem{theorem}{Theorem}[section]
\newtheorem{proposition}[theorem]{Proposition}
\newtheorem{lemma}[theorem]{Lemma}
\newtheorem{corollary}[theorem]{Corollary}
\theoremstyle{definition}

\newtheorem{assumption}[theorem]{Assumption}
\theoremstyle{remark}
\newtheorem{remark}[theorem]{Remark}

\DeclareMathOperator*{\argmax}{argmax}

\DeclareMathOperator*{\calA}{\mathcal{A}}
\DeclareMathOperator*{\calB}{\mathcal{B}}
\DeclareMathOperator*{\calS}{\mathcal{S}}
\DeclareMathOperator*{\calZ}{\mathcal{Z}}
\DeclareMathOperator*{\argmin}{arg\,min}

\title{Breaking the Grid: Distance-Guided Reinforcement \\ Learning in Large Discrete Action Spaces}

\author{
  Heiko Hoppe\textsuperscript{1} \quad
  Fabian Akkerman\textsuperscript{2} \quad
  Wouter van Heeswijk\textsuperscript{2} \quad
  Maximilian Schiffer\textsuperscript{1} \\
  \textsuperscript{1}Technical University of Munich \quad \textsuperscript{2}University of Twente \\
  \texttt{\{heiko.hoppe,schiffer\}@tum.de} \\ \texttt{\{f.r.akkerman,w.j.a.vanheeswijk\}@utwente.nl}
}

\begin{document}

\maketitle

\begin{abstract}
  Reinforcement Learning (RL) is increasingly applied to large-scale decision-making problems like logistics, scheduling, and recommender systems, but existing algorithms struggle with the curse of dimensionality in such large discrete action spaces. We propose \emph{Distance-Guided Reinforcement Learning} (DGRL), combining Sampled Dynamic Neighborhoods and Distance-Based Updates to enable efficient RL in problems with up to $10^{20}$ actions. Unlike prior methods, DGRL performs stochastic volumetric exploration and transforms policy optimization into a stable regression task, decoupling gradient variance from action space cardinality. On structured tasks, DGRL provably guarantees local value improvement. DGRL naturally generalizes to hybrid continuous-discrete action spaces. We demonstrate performance improvements of up to 66\% against state-of-the-art benchmarks across regularly and irregularly structured environments, while simultaneously improving convergence speed and computational complexity.
\end{abstract}

\section{Introduction}
\label{sec:introduction}

Reinforcement Learning (RL) has achieved remarkable success in continuous and small-scale discrete domains. However, many real-world applications -- ranging from logistics planning and scheduling to recommender systems and robot control -- require agents to navigate action spaces that exhibit both large cardinality and complex structure. These spaces often include high-dimensional discrete components, categorical variables, or combinations of discrete and continuous parameters, presenting a fundamental ``curse of cardinality'' that renders traditional approaches ineffective.

\textbf{The Scalability Gap in Prior Work.} Standard methods fail to scale to these regimes due to three primary bottlenecks: gradient variance, computational tractability, and structural rigidity. Early approaches attempt to reduce dimensionality through factorization \citep{SallansAndGeoffrey2004, pazis2011} or symbolic representations \citep{CuiAndKhardon2016}, but these require substantial manual tuning. Representation learning methods \citep{chandak2019, whitney2019} often suffer from high sample complexity as they require learning dedicated representations for every discrete action. A more scalable paradigm leverages continuous actor-critic architectures by mapping continuous proto-actions to discrete counterparts. Simple rounding \citep{hasselt2009, vanvuchelen2022} is efficient but suboptimal in complex landscapes. To improve precision, the Wolpertinger architecture \citep{dulac2015} employs $k$-nearest neighbor searches, yet this becomes computationally intractable in high-dimensional spaces. Recently, \citet{akkerman2024dynamic} introduced Dynamic Neighborhood Construction (DNC) to navigate neighborhoods via simulated annealing; while scalable, it relies heavily on regular structures (e.g., numerical adjacency), making it brittle in irregularly spaced domains. Finally, hybrid continuous-discrete action spaces pose unique challenges, as most existing algorithms treat components independently, failing to account for the tight coupling required in complex tasks \citep{Hausknecht2016, Fan2019, Peng2019, Delalleau2020}, or rely on hierarchical sequencing, incurring dependencies adversely affecting performance \citep{Xiong2018, Bester2019, Ma2021, Wei2018}. So far, no work has addressed the challenges incurred by large-scale hybrid action spaces.

\textbf{Contributions.} We propose \textit{Distance-Guided Reinforcement Learning (DGRL)}, an algorithm that introduces a volumetric projection loop composed of two synergistic pillars:
\begin{itemize}[leftmargin=*,noitemsep,topsep=0pt]
    \item \textbf{Sampled Dynamic Neighborhoods (SDN):} An action refinement method that performs stochastic volumetric exploration in the discrete neighborhood of a continuous proto-action. By utilizing the $L_\infty$ (Chebyshev) metric, SDN maintains a stable search volume as dimensionality $N \to \infty$ and reduces search complexity from exponential to linear.
    \item \textbf{Distance-Based Updates (DBU):} A denoised projection mechanism that transforms policy optimization into a stable regression task. We prove that DBU's gradient variance is independent of action cardinality $|\mathcal{A}|$, addressing the key bottleneck of gradient explosion in large spaces.
\end{itemize}
We establish that DGRL ensures local value improvement in structured tasks and enables joint optimization in hybrid action spaces, bypassing the commitment bottlenecks of hierarchical methods. Empirical results across mazes, logistics, and recommender systems show performance gains of up to 66\% against state-of-the-art benchmarks in spaces with $10^{20}$ actions, while improving stability and reducing computational overhead significantly.

\section{Problem Setup and Notation}\label{sec:problem_setup}

We consider Markov Decision Processes (MDPs) defined by the tuple $(\calS, \calA, P, R, \gamma)$, comprising a transition kernel $P(s'|s,a)$, a reward function $R(s,a)$, depending on states $s \in \calS$ and actions $a \in \calA$, and a discount factor $\gamma$. We focus on maximizing the expected return $J(\pi) = \mathbb{E}[\sum_{t=0}^T \gamma^t r_t]$.

We specifically address two challenging regimes: \emph{Large Discrete Action Spaces} and \emph{Large Hybrid Action Spaces}. To efficiently navigate such massive spaces, algorithms can benefit from exploiting their underlying topology. We therefore formalize the dimensionality constraints, metric structures, and continuous relaxations that underpin our approach.

\textbf{Dimensionality and Structure.}
We classify discrete action spaces based on two properties dictating scalability. First, we distinguish \textit{univariate} actions (a single index) and \textit{multivariate} actions. An action space $\calA$ is multivariate if each action $a$ is a vector of $N$ components $a = (a_1, \dots, a_N)$, where each component $a_n$ is chosen from a sub-space $\calA_n$. In these settings, the cardinality $|\calA|$ grows exponentially with the dimensionality $N$, rendering enumeration intractable. Second, we distinguish action spaces according to their underlying metric. We distinguish three component types: \\
\textit{Numerical:} Actions represent quantitative values (e.g., prices). Distance is natural (Euclidean). \\
\textit{Ordinal:} Actions represent ordered options (e.g., intensity). Distance is defined by rank. \\
\textit{Categorical:} Actions are unordered (e.g., items in a catalog). Distance has no direct interpretation.

Unlike algorithms relying on regular grids, see e.g.,\citet{akkerman2024dynamic}, our approach is able to handle irregularly spaced or sparse feasible regions.

\textbf{Relaxed Action Spaces.}
To navigate these large spaces, we operate in a continuous relaxation. Let $\calA \subset \mathbb{Z}^N$ be the set of valid discrete actions. We define the \textit{Relaxed Action Space} $\calA' \subseteq \mathbb{R}^N$ such that $\calA \subset \calA'$. The actor network $\varphi_\theta: \calS \to \calA'$ outputs a continuous \textit{proto-action} $\hat{a} \in \calA'$. In numerical and ordinal action spaces, the structure of $\calA$ directly translates to $\calA'$, allowing algorithms to exploit this structure in the relaxed action space.

\textbf{Hybrid Action Spaces.}
We investigate hybrid spaces $\calA \subset \mathbb{Z}^N \times \mathbb{R}^M$, where an action $a = (a_d, a_c)$ contains both discrete and continuous components. We are particularly interested in \textit{Parameterized Action Spaces} \citep{Hausknecht2016}, where the validity or semantic meaning of the continuous parameter $a_c$ depends on the discrete choice $a_d$. This creates a tight coupling between components, often rendering methods that apply independent optimization suboptimal.

\section{Methodology}
\label{sec:methodology}

We propose \emph{Distance-Guided Reinforcement Learning} (DGRL), a new algorithm for RL in action spaces with up to $10^{20}$ candidates. An off-policy actor-critic algorithm, our approach introduces a volumetric projection loop composed of two synergistic components: \textit{Sampled Dynamic Neighborhoods} (SDN) and \textit{Distance-Based Updates} (DBU). The actor generates a continuous proto-action that SDN uses as the starting point of a Chebyshev-constrained critic-based stochastic search to identify the discrete execution action. DBU updates the actor by minimizing the distance between the proto-action and a high-value target action constructed from perturbed action samples.

\subsection{Sampled Dynamic Neighborhood (SDN)}

Given a proto-action $\hat a$, SDN constructs a Chebyshev-constrained search region and applies distance-based stochastic sampling within that region. Utilizing the critic for action selection, SDN enables efficient inference in high-dimensional action spaces. We visualize SDN in Fig.~\ref{fig:sdn} and Algorithm~\ref{alg:sdn}.

\textbf{Motivation.}
In large action spaces, identifying the optimal action $a$ effectively requires solving two sub-problems: finding the promising region of the state-action space, and identifying the specific optimal discrete action within that region. Previous methods have three shortcomings: i) Simple rounding is empirically incapable of translating proto-actions to optimal discrete actions \citep{hasselt2009}. ii) Deterministic lookup methods rely on pre-computed distance tables and lack stochastic exploration \citep{dulac2015}. iii) Grid-based stochastic search methods only explore along rigid grid lines of the action space, lacking volumetric support \citep{akkerman2024dynamic}.

These shortcomings motivate us to interpret $\hat a$ as a \emph{principled prior} for a volumetric stochastic action search: it contains information about the actor's action selection preferences, which we use to search its discrete neighborhood for $a$.

\begin{figure}
    \centering
    \vspace{-8mm}
    \includegraphics[width=0.85\linewidth]{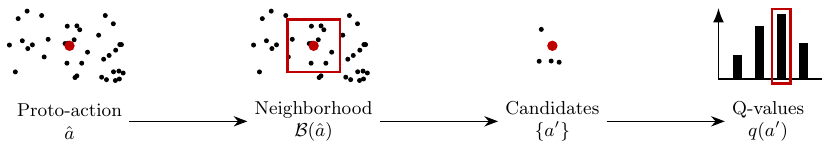}
    \vspace{-4mm}
    \caption{Schematic representation of SDN.}
    \label{fig:sdn}
    \vspace{-6mm}
\end{figure}

\textbf{Mechanism.}
Adopting the continuous-discrete mapping paradigm of \citet{hasselt2009}, our actor $\varphi_\theta(s)$ operates in the relaxed space $\calA'$, outputting a continuous \textit{proto-action} $\hat{a}$ that serves as the center of mass for a high-value region. As introduced by \citet{dulac2015}, the critic then acts as a local selector to refine this coarse estimate. In contrast to previous approaches, we construct a dynamic neighborhood $\calB(\hat{a})$ around $\hat{a}$ bounded by a Chebyshev radius $L$, chosen for its dimensional invariance. When the action space dimensionality $N$ increases, standard Euclidean ($L_2$) radii must grow with $\sqrt{N}$ to encompass a fixed volume of independent coordinate variations, leading to vanishing sampling density. In contrast, Chebyshev ($L_\infty$) radii remain invariant of $N$, ensuring that sampling probabilities remain non-vanishing even as $N \rightarrow \infty$, formalized in App.~\ref{app:chebyshev_invariance}.

\textbf{Independent Coordinate Sampling.}
To avoid the combinatorial explosion associated with enumerating all integer points in this hypercube, we approximate the neighborhood via probabilistic sampling. We exploit the structure of multivariate spaces by sampling each dimension $n \in \{1, \dots, N\}$ independently. We interpret the proto-action $\hat{a}$ as the mode of a sampling distribution over $\calB(\hat{a})$ and draw $K$ candidate actions.
The probability of sampling a candidate $a'$ decreases with its distance from $\hat{a}$, effectively creating a soft trust region. To estimate probabilities, we can use a softmax over the negative absolute distances or apply a linear scheme:
\[
    p(a'_n)=\frac{L-|\hat a_n - a'_n|+\tau_s}{\sum_{a'_{\calB,n}}L-|\hat a_n - a'_{\calB,n}|+\tau_s},
\]
for each dimension $n \in \{1, \dots, N\}$ independently, with $a'_{\calB}$ being integer options in $\calB(\hat a)$ and $\tau_s$ being the sampling temperature parameter. In our experiments, we use the linear scheme with $\tau_s=1$, which depends less on the magnitude of distances than the softmax scheme. By design, independent coordinate sampling fulfills the Chebyshev Distance constraint.
While the total action space $|\mathcal{A}|$ is massive, SDN only requires $K$ samples to obtain a local estimate of the value landscape. Because the $L_\infty$ trust region remains volumetrically stable in high dimensions (App.~\ref{app:chebyshev_invariance}), $K$ does not need to scale with $|\mathcal{A}|$. Instead, $K$ is governed by the smoothness of the latent manifold, allowing for high-performance inference with as few as 20 samples in spaces of size $10^{20}$. We provide a practical guide to choosing $L$ and $K$ in App.~\ref{app:hyperparams}. In general, we choose $L$ between 1 and 10, and $K$ between 10 and 100, maintaining computational efficiency even in large-scale action spaces.

Independent coordinate sampling reduces the search complexity from exponential $\mathcal{O}((2L)^{N})$ to linear $\mathcal{O}(N \cdot K)$, enabling scalability to dimensions where $N > 100$ while retaining local neighborhood coverage. Crucially, as a consistent statistical estimator (App.~\ref{app:consistency}), SDN maintains linear complexity regardless of the total action cardinality $|A|$, providing a computationally tractable alternative to exhaustive or axial search in high-dimensional manifolds.

We utilize the critic $\psi_\omega$ for action selection: first, the critic evaluates all candidate actions, creating a list of Q-values $q_{(k)} = Q_{\psi_\omega}(s, a'_{(k)})$. During testing, we select $a = \argmax_k q_{(k)}$. To ensure exploration during training, we sample $a$ from $\{a'\}$ based on $q_{(k)}$, using either a softmax over the Q-values or a rank-based probability metric:
\[
    p(a)=\frac{\tau_e^{\text{rank}(q(a))}}{\sum_{a'}\tau_e^{\text{rank}(q(a'))}},
\]
with $\text{rank}(q(a))$ being the descending rank of the action according to its Q-value and $\tau_e$ being the exploration temperature. In our experiments, we use the rank-based scheme with $\tau_e=0.8$, which is less influenced by the magnitude of Q-values than the softmax scheme.

To ensure these samples map correctly to the valid action space, we include a specific scaling step. This maps the actor's continuous output (typically clipped to a feasible range $[\hat{a}_\text{min}, \hat{a}_\text{max}]$) to the integer coordinates of the discrete action space $[A_\text{min}, A_\text{max}]$:
\[
    \hat{a}_\text{scaled} = \frac{\hat{a}_\text{clipped} - \hat{a}_\text{min}}{\hat{a}_\text{max} - \hat{a}_\text{min}} (A_\text{max} - A_\text{min}) + A_\text{min}.
\]

\begin{algorithm}[h]
  \caption{Sampled Dynamic Neighborhood (SDN)}
  \label{alg:sdn}
  \begin{algorithmic}
    \State {\bfseries Input:} state $s$, actor $\varphi_\theta$, critic $\psi_\omega$, radius $L$, samples $K$
    \State \textbf{1. Estimation:} Compute proto-action $\hat a \leftarrow \varphi_\theta(s)$
    \State \textbf{2. Scaling:} Map $\hat a$ to action space bounds via scaling formula
    \State \textbf{3. Independent Sampling:} Sample $K$ $a'_{(k)} \sim p(a')$ from $\calB_{L_\infty}(\hat{a}, L)$
    \State \textbf{4. Evaluation:} Compute $Q$-values $q_{(k)} = Q_{\psi_\omega}(s, a'_{(k)})$
    \State \textbf{5. Selection:}
    \If{training}
        \State Sample $a \sim p(a)$ from $\{a'\}$
    \Else
        \State Select $a = \argmax_k q_{(k)}$
    \EndIf
  \end{algorithmic}
\end{algorithm}

\textbf{Benefits of SDN.}
Compared to Wolpertinger and DNC, SDN offers four key advantages: i) SDN constructs neighborhoods during inference, eliminating the necessity of storing a distance lookup table like Wolpertinger. ii) SDN samples actions along independent dimensions, eliminating dependence on a metric action space like DNC. iii) SDN's dimension-independent sampling is easily parallelizable, empirically only scaling with $L$, in contrast to sequential candidate action generation of Wolpertinger and DNC. iv) SDN utilizes the spacial information of $\hat a$, in contrast to Wolpertinger and DNC, which fail to explore neighborhoods extensively and exhaustively independently of structure.

\subsection{Distance-Based Updates (DBU)}

With the search mechanism established, we require a compatible update rule. We propose the use of regression-based distance minimization losses towards softmax-constructed target actions.

\textbf{Motivation.}
When considering large discrete action spaces, standard RL losses are ill-suited: Policy Gradients applied to $\calA$ suffer from variance explosion as the probability mass $\pi(a|s)$ vanishes in large action spaces. When applied to the relaxed action space $\calA'$, Policy Gradients face a mismatch between the parametrized probability of choosing a proto-action $\hat a$ given $s$ and the probability of choosing an action $a$ given $s$ and the mapping procedure. This adds noise to the gradients and destabilizes learning. Deterministic policy gradients fail because the critic $\psi_\omega$ is trained only on valid discrete actions $a \in \calA$. The evaluation of $\nabla_a Q(s, \hat{a})$ for a continuous proto-action lying between valid actions is therefore inaccurate. Furthermore, the Q-function in large action spaces may be highly non-convex, necessitating the use of algorithms that can escape local optima \citep[cf.][]{Jain2025}.

\begin{figure}
    \centering
    \vspace{-8mm}
    \includegraphics[width=0.85\linewidth]{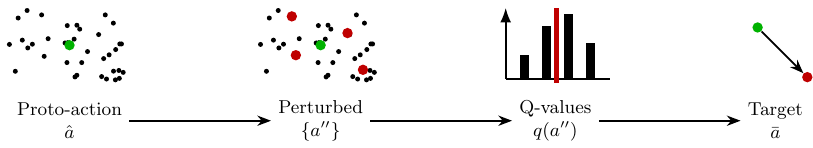}
    \vspace{-4mm}
    \caption{Schematic representation of DBU.}
    \label{fig:dbu}
    \vspace{-6mm}
\end{figure}

We propose DBU to circumvent these issues by transforming the policy update into a \textit{supervised regression} task against a stable, non-stationary target. Our loss function extends the paradigms proposed by \citet{abdolmaleki2018} and \citet{hoppe2025structured}, adapting them to the specific geometry of relaxed large discrete spaces. Intuitively, since action sampling probabilities are inversely related to their distance to $\hat a$ in SDN, distance to high-value target action is a natural loss. DBU creates such high-value targets and pushes the actor towards them, complementing the action selection of SDN. We visualize DBU in Fig.~\ref{fig:dbu} and outline it in Algorithm~\ref{alg:dbu}.

\textbf{Target Construction: The Softmax Average.}
Using a Gaussian, we first perturb and round the proto-action $\hat{a}$ to generate a set of $M$ auxiliary candidates $\{a''\}$ (distinct from the execution candidates in SDN to search beyond $\calB(\hat a)$). To generate a reliable training signal, we do not simply use the best candidate action, as this $\argmax_iq(s,a''_{(i)})$ is sensitive to critic noise -- given learned Q-values may not reflect the ground truth, the argmax can be the result of critic overestimation. Instead, we compute the softmax-weighted average of the candidates:
\[
    \bar{a} = \sum_{i=1}^M a''_{(i)} \cdot \frac{\exp(Q(s, a''_{(i)})/\tau_b)}{\sum_{j=1}^M \exp(Q(s, a''_{(j)})/\tau_b)}.
\]
This operation acts as a denoising filter, producing a target $\bar{a}$ that represents the expected location of the optimal action. By averaging over candidates, it pushes the target away from narrow local optima and smoothens the loss landscape. In our experiments, we use $M=40$ and $\tau_b=0.01$.

\textbf{Regression Update.}
We update the actor weights $\theta$ to minimize the distance to this target:
\[
    J(\theta) = \|\varphi_\theta(s) - \bar{a}\|_2^2,
\]
in practice utilizing Huber losses. This provides a dense, low-variance gradient signal that points directly toward the high-value target action, independent of the cardinality of $\calA$. The update effectively pulls the proto-action $\hat{a}$ towards high-value discrete actions found by the critic.

\begin{algorithm}[h]
  \caption{Distance-Based Updates (DBU)}
  \label{alg:dbu}
  \begin{algorithmic}
    \State {\bfseries Input:} state $s$, actor $\varphi_\theta$, critic $\psi_\omega$, noise $\sigma_b$
    \State \textbf{1. Perturbation:} Generate $M$ candidates $a''_{(i)} \sim \mathrm{round}(\mathcal{N}(\varphi_\theta(s), \sigma_b))$
    \State \textbf{2. Evaluation:} $q_{(i)} \leftarrow Q_{\psi_\omega}(s, a''_{(i)})$
    \State \textbf{3. Compute target:} $\bar{a} \leftarrow \sum_{i=1}^M a''_{(i)} \cdot \frac{\exp(Q(s, a''_{(i)})/\tau_b)}{\sum_{j=1}^M \exp(Q(s, a''_{(j)})/\tau_b)}$
    \State \textbf{4. Update:} Minimize $J(\theta) = ||\varphi_\theta(s) - \bar{a}||^2_2$
  \end{algorithmic}
\end{algorithm}

\textbf{Escape from local optima.}
DBU enables the actor to escape local optima using the target action construction: by perturbation and critic-based evaluation, it searches for high-value targets beyond the neighborhood $\calB(\hat a)$. The softmax average mitigates potential critic overestimation problems and smoothens the target. Even if the Q-function is non-convex in the action space, iteratively applying DBU amounts to a regression towards an identified global optimum, without requiring multi-worker approaches and value function truncation as in \citet{Jain2025}.

\textbf{Critic design.}
DBU is an off-policy algorithm that uses replay buffers, target Q-networks updated with polyak averaging, and double Q-learning. The critic $\psi_\omega$ receives a state $s$ and a feasible action $a$ as input and outputs $q = Q_{\psi_\omega}(s, a)$. For implementation details, we refer to App.~\ref{app:implementation}.

\subsection{Unified Treatment of Hybrid Spaces}

Our framework offers a unified handling of hybrid action vectors $a = (a_d, a_c)$. In parameterized spaces, the continuous parameter $a_c$ is often semantically coupled to the discrete choice $a_d$.
Traditional approaches typically either require separate network heads, which struggle to account for action dependencies, or hierarchical architectures, which can destabilize training.
In our approach, the actor outputs a joint proto-action $\hat{a} = (\hat{a}_d, \hat{a}_c) \in \calA'$. We handle exploration and updates uniformly across components: for exploration (SDN), we sample $a_d$ from the discrete neighborhood and $a_c$ from the continuous Gaussian neighborhood simultaneously, evaluating the joint tuple to capture interdependencies and making action selection as efficient as in discrete spaces.
For updates (DBU), the distance loss applies to the full vector: $J(\theta) = \|\hat{a}_d - \bar{a}_d\|^2 + \|\hat{a}_c - \bar{a}_c\|^2$. This allows the discrete and continuous components to inform each other through the shared representation layers of the actor, performing approximate joint coordinate ascent without combinatorial design overhead. Unlike hierarchical methods suffering from a regret floor if the high-level choice is sub-optimal (Prop.~\ref{prop:regret_appendix}), the unified distance loss allows mutual regularization of the continuous and discrete components.

\section{Theoretical Properties}\label{sec:analysis}
Having detailed DGRL's algorithmic structure, we analyze its theoretical properties as an action search and learning operator. We propose that i) stochastic sampling provides meaningful local coverage, ii) regression-based updates avoid variance growth with action cardinality, iii) we enable local value improvement in structured spaces, and iv) DGRL extends naturally to hybrid spaces.

\textbf{Efficiency of Volumetric Exploration.}
In high-dimensional discrete spaces, enumerating the neighborhood $\mathcal{B}(\hat{a})$ is intractable. SDN instead uses stochastic sampling to approximate $\max_{a \in \mathcal{B}(\hat{a})} Q(s, a)$. While consistency holds as $K \to \infty$ (App.~\ref{app:consistency}), practical efficiency depends on the geometry of the candidate region. We contrast SDN's volumetric exploration with the axial search used in prior methods such as DNC.

\begin{proposition}[Volumetric vs.\ Axial Candidate Sets]\label{prop:volumetric_supportexploration}
Let $L\in\mathbb{N}$ be the Chebyshev radius and $n$ the latent dimension.
Define the SDN candidate set
$\mathcal{C}_{\mathrm{SDN}} := B_\infty(\hat z, L)\cap\mathbb{Z}^n$
and the axial-search candidate set
$\mathcal{C}_{\mathrm{axial}} := \bigl(\bigcup_{i=1}^{n}\{\hat z + \alpha e_i:\alpha\in[-L,L]\}\bigr)\cap\mathbb{Z}^n$.
For $n\ge 2$,$
|\mathcal{C}_{\mathrm{SDN}}| = (2L+1)^{n}, \qquad
|\mathcal{C}_{\mathrm{axial}}| \le 2nL + 1$, i.e., the SDN candidate set is exponential in $n$ while axial sets are
only linear in $n$. Equivalently, the underlying continuous candidate
\emph{generation regions} have Lebesgue measures
$\mu(B_\infty(\hat z,L)) = (2L)^{n}$ and $\mu(S_{\mathrm{axial}}) = 0$;
the exponential vs. linear cardinality gap is the discrete analogue of the dimensional difference between the generating regions.
\end{proposition}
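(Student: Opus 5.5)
The proof is a direct counting argument on each of the two sets, followed by a measure computation for the generating regions. Throughout I will take the center $\hat z$ to lie on the integer lattice, $\hat z\in\mathbb{Z}^n$. This matches the way SDN centers its neighborhood after scaling. The equality $|\mathcal{C}_{\mathrm{SDN}}|=(2L+1)^n$ actually requires this assumption: for a non-integer coordinate, the interval $[\hat z_i-L,\hat z_i+L]$ contains only $2L$ integers. I will therefore state the lattice assumption explicitly, and remark that without it one still gets $(2L)^n\le|\mathcal{C}_{\mathrm{SDN}}|\le(2L+1)^n$, which is still exponential in $n$.

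\emph{Step 1 (SDN set).} I would use the product structure of the Chebyshev ball,
\[
B_\infty(\hat z,L)\cap\mathbb{Z}^n=\prod_{i=1}^n\bigl([\hat z_i-L,\hat z_i+L]\cap\mathbb{Z}\bigr).
\]
For $\hat z_i\in\mathbb{Z}$ and $L\in\mathbb{N}$, each factor is $\{\hat z_i-L,\dots,\hat z_i+L\}$, which has $2L+1$ elements. Multiplying the factors gives $(2L+1)^n$. This is exactly the independent-coordinate structure that SDN's sampler exploits.

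\emph{Step 2 (axial set).} Let $S_i=\{\hat z+\alpha e_i:\alpha\in[-L,L]\}$. Since $\hat z$ is integral, $\hat z+\alpha e_i\in\mathbb{Z}^n$ if and only if $\alpha\in\mathbb{Z}\cap[-L,L]$. Hence each $S_i\cap\mathbb{Z}^n$ consists of $2L+1$ points, and all of these sets share the center $\hat z$. For $i\neq j$ the two segments meet only at $\hat z$, because a common point needs $\alpha e_i=\beta e_j$, which forces $\alpha=\beta=0$. Inclusion–exclusion then gives exactly $n\cdot 2L+1$ points, so the bound holds with equality.

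For completeness I would also treat a non-lattice $\hat z$. If at least two coordinates of $\hat z$ are non-integral, every $S_i$ misses $\mathbb{Z}^n$, so the set is empty. If exactly one coordinate $j$ is non-integral, only $S_j$ can contain lattice points, and it contains at most $2L+1\le 2nL+1$ of them. In every case the bound $|\mathcal{C}_{\mathrm{axial}}|\le 2nL+1$ holds.

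\emph{Step 3 (measures).} The cube $B_\infty(\hat z,L)=\prod_i[\hat z_i-L,\hat z_i+L]$ has Lebesgue measure $(2L)^n$ by Fubini. The axial region $S_{\mathrm{axial}}=\bigcup_i S_i$ is a finite union of one-dimensional segments. For $n\ge2$, each segment is contained in an affine line, which is a null set in $\mathbb{R}^n$, so $\mu(S_{\mathrm{axial}})=0$.

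The ``discrete analogue'' sentence in the statement is interpretive rather than a separate claim. I would justify it by noting that lattice-point counts scale like $L^{\dim}$: the cube is $n$-dimensional and gives $\Theta(L^n)$ points, while each segment is one-dimensional and gives $\Theta(L)$ points. The only delicate point in the whole argument is the lattice assumption on $\hat z$ behind the exact equality in Step 1. I expect that bookkeeping to be the main obstacle; the counting itself is routine.
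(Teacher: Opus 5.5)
Your proof is correct and follows the paper's argument essentially step for step. You count $(2L+1)^n$ via the product structure of the $L_\infty$ ball, and you get $2nL+1$ by counting per-axis segments and removing the shared centre. For the measures, you take the cube's Lebesgue measure $(2L)^n$ and show the axial segments are null for $n\ge 2$.

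Your explicit hypothesis $\hat z\in\mathbb{Z}^n$ is a genuine improvement. The paper uses it only tacitly: its product $\prod_i(\{\hat z_i\}+\{-L,\dots,L\})$ and the ``shared centre'' subtraction both presuppose an integral centre. Your remark that $(2L)^n\le|\mathcal{C}_{\mathrm{SDN}}|\le(2L+1)^n$ holds in general, with the axial bound still valid, correctly covers the non-lattice case that the statement as written glosses over.
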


\paragraph{Implication.}
SDN's discrete neighbourhood spans an exponentially larger set of
multi-coordinate combinations than axial search. Off-axis discrete
optima -- which are inaccessible to coordinate-aligned exploration --
are captured by SDN with non-vanishing probability under the independent
coordinate sampling distribution. See App.~\ref{app:volumetric_supportexploration} for the full derivation.

\textbf{Variance-Independent Updates.}
The primary bottleneck in scaling model-free RL to large discrete spaces is gradient variance. Standard Policy Gradient (PG) methods rely on the score function estimator $\nabla \log \pi(a|s)$. As $|\calA| \to \infty$, the probability mass on any action $\pi(a|s) \to 0$, causing variance to scale linearly with $|\calA|$. DBU avoids this pathology.

\begin{theorem}[Removal of Action-Cardinality Dependence]\label{thm:variance}
Let $\varphi_\theta:\mathcal{S}\to\mathcal{A}'\subseteq\mathbb{R}^{N}$
be the actor and $J_\theta(s):=\partial\varphi_\theta(s)/\partial\theta$
its parameter Jacobian. Assume $\varphi_\theta$ is $G$-Lipschitz in
$\theta$ uniformly in $s$, so that $\|J_\theta(s)\|_{2}\le G$. Then for
every state $s$, the per-sample DBU gradient noise -- the covariance of
$g_{\mathrm{DBU}}$ over the randomness of the target $\bar a\mid s$ --
satisfies
\(
\operatorname{Tr}\!\bigl(\operatorname{Cov}[g_{\mathrm{DBU}}\mid s]\bigr)
\;\le\; G^{2}\,\operatorname{Tr}\!\bigl(\operatorname{Cov}[\bar a\mid s]\bigr).
\)
Both sides are independent of $|\mathcal{A}|$: the RHS is a
property of the target distribution $\bar a\mid s$, which is constructed
from a fixed number $M$ of candidates and lies in the relaxed action
space $\mathcal{A}'\subseteq\mathbb{R}^{N}$.
\end{theorem}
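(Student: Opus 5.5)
The plan is to write the DBU gradient explicitly and observe that, conditional on $s$, it is an affine function of the random target $\bar a$ with a fixed linear map whose operator norm is at most $G$. With the squared loss $J(\theta)=\|\varphi_\theta(s)-\bar a\|_2^2$, the chain rule gives
\[
g_{\mathrm{DBU}} = \nabla_\theta J(\theta) = 2\,J_\theta(s)^{\top}\bigl(\varphi_\theta(s)-\bar a\bigr).
\]
We absorb the factor $2$ into the loss normalization, using $\tfrac12\|\cdot\|_2^2$ as is standard for regression losses, or equivalently into $G$. The point to stress is that, conditional on $s$, both $\varphi_\theta(s)$ and $J_\theta(s)$ are deterministic. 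The only randomness comes from the $M$ Gaussian perturbations and roundings that produce $\bar a$, and the target is treated as a stop-gradient quantity.

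Since $g_{\mathrm{DBU}} = c - J_\theta(s)^{\top}\bar a$ with $c$ constant given $s$, the affine covariance rule gives
\[
\operatorname{Cov}[g_{\mathrm{DBU}}\mid s]=J_\theta(s)^{\top}\operatorname{Cov}[\bar a\mid s]\,J_\theta(s).
\]
Write $\Sigma:=\operatorname{Cov}[\bar a\mid s]\succeq 0$ and $J:=J_\theta(s)$. Then
\[
\operatorname{Tr}(J^{\top}\Sigma J)=\operatorname{Tr}(\Sigma^{1/2}JJ^{\top}\Sigma^{1/2})\le \lambda_{\max}(JJ^{\top})\operatorname{Tr}(\Sigma)=\|J\|_2^2\operatorname{Tr}(\Sigma)\le G^2\operatorname{Tr}(\Sigma).
\]
The inequality uses $JJ^{\top}\preceq \|J\|_2^2 I$, conjugation by $\Sigma^{1/2}$, and monotonicity of the trace on the PSD cone. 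The bound $\|J\|_2\le G$ follows from uniform $G$-Lipschitzness in $\theta$, since the operator norm of the derivative is bounded by the Lipschitz constant; we simply take this as given, as in the statement.

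The remaining step is the $|\mathcal{A}|$-independence claim, which is interpretive rather than computational. First, $\bar a$ is a convex combination of $M$ rounded Gaussian samples $a''_{(i)}$, so $\bar a$ lies in the convex hull of the candidates. Hence $\operatorname{Tr}\operatorname{Cov}[\bar a\mid s]\le \mathbb{E}\|\bar a-\varphi_\theta(s)\|_2^2\le \mathbb{E}\max_i\|a''_{(i)}-\varphi_\theta(s)\|_2^2$. This quantity is controlled by $N$, $M$ and $\sigma_b$ (for instance by $M(N\sigma_b^2+N/4)$, where $N/4$ bounds the squared rounding error), and it is bounded even if the critic is arbitrary. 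None of these quantities involve $|\mathcal{A}|$, which can grow exponentially in $N$ or through the range $[A_{\min},A_{\max}]$ without affecting the bound.

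I expect the main obstacle to be stating this independence cleanly. One must argue that no $1/\pi(a\mid s)$ factor appears, unlike the score-function estimator. The bound must also survive the Huber variant; there the gradient is $J^{\top}h(\varphi_\theta(s)-\bar a)$ with $h$ $1$-Lipschitz, so one uses $\operatorname{Tr}\operatorname{Cov}[h(X)]\le\mathbb{E}\|h(X)-h(\mathbb{E}X)\|^2\le\operatorname{Tr}\operatorname{Cov}[X]$ to recover the same inequality.
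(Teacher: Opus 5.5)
Your proof is correct and follows the paper's argument. Conditional on $s$, the gradient is an affine image of $\bar a$, so $\operatorname{Cov}[g_{\mathrm{DBU}}\mid s]=J_\theta(s)^{\top}\operatorname{Cov}[\bar a\mid s]\,J_\theta(s)$, and the trace is then bounded by $\|J_\theta(s)\|_2^2\operatorname{Tr}(\operatorname{Cov}[\bar a\mid s])$; your $\Sigma^{1/2}$-conjugation is just another way of writing the paper's cyclicity step. Your additions are consistent with the paper: it defines $g_{\mathrm{DBU}}$ without the factor $2$, which amounts to your $\tfrac12\|\cdot\|_2^2$ normalization; it argues $|\mathcal{A}|$-independence via the cruder global bound $\operatorname{Tr}(\operatorname{Cov}[\bar a\mid s])\le\operatorname{diam}(\mathcal{A}')^2$ rather than your localized $\sigma_b$-based bound; and it does not treat the Huber variant.
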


\textit{Proof Sketch.}
$g_{\mathrm{DBU}} = J_\theta(s)^{\top}\bigl(\varphi_\theta(s)-\bar a\bigr)$.
Conditioning on $s$, both $\varphi_\theta(s)$ and $J_\theta(s)$ are
deterministic constants; the only stochasticity is in $\bar a$. Since
covariance is invariant under deterministic shifts and linear under
deterministic linear maps,
$\operatorname{Cov}[g_{\mathrm{DBU}}\mid s]=J_\theta(s)^{\top}\operatorname{Cov}[\bar a\mid s]J_\theta(s)$.
Taking trace and using $\operatorname{Tr}(A^{\top}CA)\le\|A\|_{2}^{2}\operatorname{Tr}(C)$
for PSD $C$ gives the main result. Full proof in App.~\ref{app:action_cardinality}.

\textbf{Trust Region and Local Value Improvement.}
In many numerically or ordinally structured action spaces, it is reasonable to assume that distances between actions translate to distances between Q-values: e.g., in inventory replenishment, a small change in order quantities usually has a bounded effect on Q-values. Similarly, switching adjoining items in ordered lists often has bounded effects on the value function. We formalize this in the following, see App.~\ref{app:latent_lipschitz} for further empirical grounding.

\begin{assumption}[Latent Lipschitz Continuity]
\label{ass:latent_lipschitz}
We assume a discrete action space $\calA$ with an embedding function
$\phi: \calA \to \calZ \subseteq \mathbb{R}^n$ (where $\calZ$ that corresponds to our relaxed space $\calA'$), such that the state-action value function $Q(s, a)$ admits a continuous extension $\tilde{Q}: \calS \times \calZ \to \mathbb{R}$ which is $L$-Lipschitz continuous with respect to a latent metric $\|\cdot\|_p: |\tilde{Q}(s, \phi(a)) - \tilde{Q}(s, \phi(a'))| \leq L \|\phi(a) - \phi(a')\|_p, \forall a, a' \in \calA$. Furthermore, we assume the embedding $\phi$ is semantic, such that proximity in $\calZ$ implies functional similarity in $\calA$.
\end{assumption}

Therefore, distance is a proxy for value in Lipschitz continuous structured action spaces:

\begin{proposition}[Approximation Bound via Lipschitz Continuity]
\label{prop:lipschitz_continuity}
Let $Q(s, \cdot): \mathcal{A}' \to \mathbb{R}$ be $L_Q$-Lipschitz continuous with respect to the Euclidean norm $\|\cdot\|_2$. Let $a^\star$ be the optimal discrete action, $\hat{a}$ a continuous proto-action, and $\bar{a}$ a target action. Let $a_{\mathrm{nn}}$ denote the nearest discrete neighbor of $\hat{a}$. Then
\(
Q(s, a^\star) - Q(s, a_{\mathrm{nn}})
\leq L_Q \left( \sqrt{J(\theta)} + \|\bar{a} - a^\star\|_2 + \varepsilon_{\mathrm{round}} \right),
\)
where $J(\theta) = \|\hat{a} - \bar{a}\|_2^2$ and $\varepsilon_{\mathrm{round}} = \|\hat{a} - a_{\mathrm{nn}}\|_2$.
\end{proposition}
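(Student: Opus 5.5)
The plan is to apply the Lipschitz property once, to the pair $(a^\star, a_{\mathrm{nn}})$, and then bound their distance by a chain of triangle inequalities through the two intermediate points $\bar a$ and $\hat a$. Both $a^\star$ and $a_{\mathrm{nn}}$ are discrete actions and hence lie in $\mathcal{A}\subset\mathcal{A}'$. So the $L_Q$-Lipschitz continuity of $Q(s,\cdot)$ on $\mathcal{A}'$ applies to them directly and gives
\[
Q(s,a^\star)-Q(s,a_{\mathrm{nn}}) \le \bigl|Q(s,a^\star)-Q(s,a_{\mathrm{nn}})\bigr| \le L_Q\,\|a^\star-a_{\mathrm{nn}}\|_2 .
\]
We never need to evaluate $Q$ at the continuous points $\hat a$ or $\bar a$; they enter only as waypoints in the distance estimate.

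Next we decompose the distance along the path $a^\star \to \bar a \to \hat a \to a_{\mathrm{nn}}$ using the triangle inequality:
\[
\|a^\star-a_{\mathrm{nn}}\|_2 \le \|a^\star-\bar a\|_2 + \|\bar a-\hat a\|_2 + \|\hat a-a_{\mathrm{nn}}\|_2 .
\]
We then identify each term with a quantity in the statement. The middle term equals $\sqrt{J(\theta)}$, because $J(\theta)=\|\hat a-\bar a\|_2^2$ by definition and the Euclidean norm is nonnegative. The last term is $\varepsilon_{\mathrm{round}}$ by definition. The first term equals $\|\bar a-a^\star\|_2$ by symmetry of the norm. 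Substituting these three identifications and multiplying by $L_Q$ gives the claimed bound.

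No step is technically hard. The only point needing care is the first one, checking that the Lipschitz hypothesis applies on a domain containing both discrete endpoints. The statement assumes $Q(s,\cdot)$ is Lipschitz on all of $\mathcal{A}'$, in the spirit of the continuous extension $\tilde Q$ of Assumption~\ref{ass:latent_lipschitz}, and $\mathcal{A}\subset\mathcal{A}'$, so this holds. If one instead works with $\tilde Q$ through the embedding $\phi$, the same chain goes through with $\phi(a^\star)$ and $\phi(a_{\mathrm{nn}})$ in place of $a^\star$ and $a_{\mathrm{nn}}$, and with $p=2$.

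Finally, the bound does not depend on how $a_{\mathrm{nn}}$ is chosen, since only its definition as the rounding of $\hat a$ enters through $\varepsilon_{\mathrm{round}}$. For the nearest neighbour on the integer lattice this gives the explicit estimate $\varepsilon_{\mathrm{round}}\le \sqrt{N}/2$.
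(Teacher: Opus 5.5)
Your proof is correct and uses the same decomposition as the paper: both route through the intermediate points $\bar a$ and $\hat a$ and identify the three pieces with $\|\bar a-a^\star\|_2$, $\sqrt{J(\theta)}$ and $\varepsilon_{\mathrm{round}}$. The only difference is the order of operations: the paper telescopes the \emph{values} $Q(s,\cdot)$ at $\bar a$ and $\hat a$ and applies the Lipschitz bound to each of the three differences, whereas you apply it once to $(a^\star,a_{\mathrm{nn}})$ and then telescope distances, which has the small advantage of needing Lipschitz continuity only between discrete actions and never evaluating $Q$ at $\hat a$ or $\bar a$ (so, for instance, $\bar a\in\mathcal{A}'$ is not needed).
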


\textit{Proof Sketch.} The value gap decomposes into the distance from $\hat{a}$ to the target $\bar{a}$, and from $\bar{a}$ to the true optimum $a^\star$. By Lipschitz continuity, minimizing the distance $J(\theta)$ minimizes the upper bound of the value loss. See App.~\ref{app:lipschitz_continuity}.

We frame DBU as an instance of Generalized Policy Iteration, in which the target $\bar{a}$ is constructed via a softmax operation that filters actions with Q-values above the current average.

\begin{proposition}[Local Improvement under Smoothness]\label{prop:improvement}
Assume on a neighbourhood of $\mu_{\mathrm{old}}:=\varphi_{\theta_{\mathrm{old}}}(s)$ satisfying
(i) $Q(s,\cdot)$ is $\beta$-smooth and twice continuously differentiable; (ii) the perturbation scale $\sigma_b$ used to construct
$\bar a$ is small enough that the first-order Taylor expansion of $Q$
in the perturbations is valid, and $\nabla_a Q(s,\mu_{\mathrm{old}})\neq 0$; (iii) the actor learning rate $\alpha$ is sufficiently small. Then a single DBU step on the actor satisfies $Q\bigl(s,\varphi_{\theta_{\mathrm{new}}}(s)\bigr)
\;\ge\;
Q\bigl(s,\varphi_{\theta_{\mathrm{old}}}(s)\bigr)
\;+\;\Omega(\alpha\sigma_b^{2}/\tau_b)\cdot\bigl\|J_\theta(s)^{\top}\nabla_aQ(s,\mu_{\mathrm{old}})\bigr\|_{2}^{2}$. Combined with Lipschitz continuity of $Q$ on $\mathcal{A}'$
(Assumption~\ref{ass:latent_lipschitz}), this implies $
Q\bigl(s,a^{\star}\bigr) - Q\bigl(s,a_{\mathrm{exec}}\bigr)
\;\le\; L_Q\bigl(\sqrt{J(\theta_{\mathrm{new}})} + d(\bar a,a^{\star}) + \varepsilon_{\mathrm{round}}\bigr)$,
i.e., DBU improvement at the proto-action transfers to a bounded
sub-optimality gap for the executed discrete action $a_{\mathrm{exec}}$.
\end{proposition}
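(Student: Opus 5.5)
The plan has three parts: a first-order expansion of the softmax target $\bar a$ around $\mu_{\mathrm{old}}$, one step of gradient descent on $J(\theta)$, and then a smoothness (descent-lemma) argument for $Q$. The suboptimality bound for $a_{\mathrm{exec}}$ is then read off from Prop.~\ref{prop:lipschitz_continuity}.

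\textbf{Step 1: the target lies along the gradient.} Write each auxiliary candidate as $a''_{(i)}=\mu_{\mathrm{old}}+\sigma_b\xi_i$ with $\xi_i\approx\mathcal{N}(0,I)$, and treat rounding as absorbed into the neighbourhood where (ii) applies. Under (ii), $Q(s,a''_{(i)})\approx Q(s,\mu_{\mathrm{old}})+\sigma_b\langle g,\xi_i\rangle$, where $g:=\nabla_aQ(s,\mu_{\mathrm{old}})$. The constant term cancels in the softmax, so the weights are $w_i\propto\exp(\sigma_b\langle g,\xi_i\rangle/\tau_b)$. In the regime $\sigma_b\|g\|/\tau_b$ small, linearize the softmax:
\[
w_i\approx\tfrac1M\Bigl(1+\tfrac{\sigma_b}{\tau_b}\bigl(\langle g,\xi_i\rangle-\overline{\langle g,\xi\rangle}\bigr)\Bigr).
\]
This gives
\[
\bar a-\mu_{\mathrm{old}}\approx\sigma_b\bar\xi+\tfrac{\sigma_b^2}{\tau_b}\widehat{\mathrm{Cov}}(\xi)\,g .
\]
Taking expectation over the perturbations yields $\mathbb{E}[\bar a]-\mu_{\mathrm{old}}=c\,\tfrac{\sigma_b^2}{\tau_b}g$ with $c=(M-1)/M>0$, up to higher-order terms. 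The single-step claim should therefore be read in expectation over the perturbations, or for $M$ large enough that concentration applies.

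\textbf{Step 2: one actor step.} The DBU gradient is $J_\theta^\top(\mu_{\mathrm{old}}-\bar a)$, so the update is $\theta_{\mathrm{new}}=\theta_{\mathrm{old}}+\alpha J_\theta^\top(\bar a-\mu_{\mathrm{old}})$. A first-order expansion of the actor gives
\[
\mu_{\mathrm{new}}-\mu_{\mathrm{old}}=\alpha J_\theta J_\theta^\top(\bar a-\mu_{\mathrm{old}})+O(\alpha^2).
\]
Next apply the $\beta$-smoothness of $Q$:
\[
Q(s,\mu_{\mathrm{new}})\ge Q(s,\mu_{\mathrm{old}})+\langle g,\mu_{\mathrm{new}}-\mu_{\mathrm{old}}\rangle-\tfrac\beta2\|\mu_{\mathrm{new}}-\mu_{\mathrm{old}}\|^2 .
\]
Substituting Step 1, the linear term equals
\[
\alpha c\tfrac{\sigma_b^2}{\tau_b}\,g^\top J_\theta J_\theta^\top g=\alpha c\tfrac{\sigma_b^2}{\tau_b}\|J_\theta^\top g\|_2^2 .
\]
The quadratic term and the actor-curvature remainder are both $O(\alpha^2)$. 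By (iii) they are dominated by half of the linear term; this uses $J_\theta^\top g\neq0$, which is implicit in the claimed bound being nontrivial. Together these give the $\Omega(\alpha\sigma_b^2/\tau_b)\|J_\theta^\top g\|_2^2$ improvement.

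\textbf{Step 3: transfer to the executed action.} Apply Prop.~\ref{prop:lipschitz_continuity} with $\hat a=\varphi_{\theta_{\mathrm{new}}}(s)$, identifying $a_{\mathrm{exec}}$ with $a_{\mathrm{nn}}$, the nearest discrete neighbour of the new proto-action. This requires only the triangle inequality $\|a_{\mathrm{nn}}-a^\star\|\le\|a_{\mathrm{nn}}-\hat a\|+\|\hat a-\bar a\|+\|\bar a-a^\star\|$ together with the Lipschitz property. It directly yields the stated gap with $J(\theta_{\mathrm{new}})$. If $a_{\mathrm{exec}}$ is instead chosen by SDN's argmax over a candidate set containing $a_{\mathrm{nn}}$, then under the critic its value is at least that of $a_{\mathrm{nn}}$, and the bound carries over.

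\textbf{Main obstacle.} The hardest part is Step 1. Making rigorous that the softmax-weighted average moves in the direction of $g$, with a coefficient of exactly order $\sigma_b^2/\tau_b$, requires controlling three error sources: the Taylor remainder, which is $O(\beta\sigma_b^2)$ in the exponent; the linearization of the softmax, which needs $\sigma_b\|g\|/\tau_b$ bounded; and the sampling fluctuation $\sigma_b\bar\xi$. I would first try to bound everything in expectation. For small $\tau_b$, where linearization fails, I would fall back on an argument that weights are monotone in $\langle g,\xi_i\rangle$, giving positive correlation $\mathbb{E}[\langle g,\bar a-\mu_{\mathrm{old}}\rangle]>0$. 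The constant absorbed into $\Omega$ would then no longer scale as $1/\tau_b$.
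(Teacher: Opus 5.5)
Your proposal has the same three-step skeleton as the paper's proof:
\begin{enumerate}
\item alignment of $\bar a-\mu_{\mathrm{old}}$ with $g=\nabla_aQ(s,\mu_{\mathrm{old}})$;
\item the linearized actor step $\delta=\alpha J_\theta J_\theta^\top(\bar a-\mu_{\mathrm{old}})+O(\alpha^2)$, inserted into the $\beta$-smoothness inequality;
\item the Lipschitz approximation bound applied at $\theta_{\mathrm{new}}$.
\end{enumerate}
The real difference is in Step 1.

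\textbf{The paper's Step 1.} The paper passes to the population limit $M\to\infty$ and ignores rounding. It writes $\bar a-\mu_{\mathrm{old}}$ as the mean of $\mathcal{N}(0,\sigma_b^2I)$ reweighted by $\exp(Q/\tau_b)$. It then uses the exact fact that tilting this Gaussian by $\exp(g^\top\xi/\tau_b)$ gives a Gaussian with mean $\sigma_b^2g/\tau_b$. This avoids linearizing the softmax and yields a deterministic target, but it describes an idealized estimator.

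\textbf{Your Step 1.} You keep $M$ finite, linearize the softmax, and average over the perturbations. This costs you the extra hypothesis $\sigma_b\|g\|/\tau_b\lesssim 1$ and the factor $(M-1)/M$. In exchange, it matches the estimator actually implemented, and it exposes the fluctuation $\sigma_b\bar\xi$, which forces an in-expectation (or large-$M$) reading of the claim.

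\textbf{Your small-$\tau_b$ caveat is intrinsic.} It reflects the problem, not your method. For finite $M$, the component of $\bar a-\mu_{\mathrm{old}}$ along $g$ is a convex combination of the values $\sigma_b\langle g,\xi_i\rangle/\|g\|$, i.e.\ of $\sigma_b$ times $M$ i.i.d.\ standard normals. Its expectation is therefore at most $\sigma_b\sqrt{2\log M}$. So the $\sigma_b^2/\tau_b$ rate can only hold when $\sigma_b\|g\|/\tau_b\lesssim\sqrt{2\log M}$; beyond that, the paper's rate rests on the continuum idealization.

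\textbf{Two side remarks tighten the paper.} First, the leading term is proportional to $\|J_\theta^\top g\|_2^2$, so the argument really needs $J_\theta^\top g\neq 0$, not just the assumed $g\neq 0$. Second, for SDN-selected execution the paper simply asserts that the nearest-neighbour bound applies. Your observation that $a_{\mathrm{nn}}$ always belongs to the candidate set (as in the implementation) is what actually justifies this, provided the $Q$ in the bound is the critic's value function.
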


In environments where the action-value function is locally smooth, DBU thus promotes local improvement of the continuous surrogate $Q(s, \varphi_\theta(s))$. In settings where this assumption is violated, the update remains well-defined and the variance and stability properties established continue to hold; DBU maintains full applicability. See App.~\ref{app:improvement} for a rigorous derivation.

\textbf{Extension to Hybrid Spaces.}
Unlike hierarchical architectures that suffer from a commitment bottleneck, our unified update allows for mutual regularization by selecting a joint discrete-continuous proto-action. This eliminates the hard regret floor typical for cluster-based methods (Prop.~\ref{prop:regret_appendix}), as the policy gradient flows through a coupled representation rather than a sequential decision tree. DGRL treats the latent space $\mathcal{Z}$ as a unified manifold, where the joint proto-action $(\hat a_d, \hat a_c)$ provides structural constraints while SDN-based selection refines the local manifold. This creates mutual regularization, enabling stable, joint optimization that thrives even under partial coupling (Lemma~\ref{lem:hybrid_appendix}).

\begin{remark}[Approximate Coordinate Ascent]
For a hybrid action $a=(a_d, a_c)$, the DBU loss decomposes into orthogonal components: $\|\hat{a} - \bar{a}\|^2 = \|\hat{a}_d - \bar{a}_d\|^2 + \|\hat{a}_c - \bar{a}_c\|^2$. Optimizing this joint loss can be interpreted as an approximate block coordinate update with respect to a local surrogate of the Q-function. Crucially, while the \textit{gradients} are decoupled, the \textit{target} $(\bar{a}_d, \bar{a}_c)$ is derived from a joint search, capturing dependencies without the interference typical of joint gradient descent.
\end{remark}

\section{Numerical Study}\label{sec:exp_setup}

\textbf{Experimental Design.}
We test the empirical performance of DGRL on twelve versions of four environments common in the literature on scalable DRL, covering a wide range of (irregularly) structured discrete and hybrid problem settings. First, we consider a 2D maze environment \citep[cf.][]{dulac2015, chandak2019, akkerman2024dynamic}: an agent navigates a maze by selecting $N$ out of $D$ evenly spaced actuators. The actuators are logically sequenced (randomly shuffled) in the (irregularly) structured versions. The step size is parametrized separately in the hybrid versions. Second, we consider a categorical Recommender environment \citep[cf.][]{dulac2015, chandak2019}, based on real-world movie preference data \citep{Harper2015}: an agent recommends $N$ out of 343 movies, the customer can choose a movie or terminate the episode. In the hybrid variants, the price per movie is estimated separately. Finally, we consider a structured Job Shop Scheduling Problem and a structured Joint Inventory Replenishment Problem \citep[cf.][]{akkerman2024dynamic, vanvuchelen2022}. For environment details, we refer to App.~\ref{sec:environments}.

We compare the performance of DGRL to six discrete and three hybrid benchmarks: as discrete benchmarks, we use a Vanilla Actor-Critic (VAC) algorithm, Continuous Actor-Critic Learning Automaton (Cacla) \citep{hasselt2009}, Wolpertinger \citep{dulac2015}, Learned Action Representations (LAR) \citep{chandak2019} and Dynamic Neighborhood Construction (DNC) with simulated annealing and greedy search \citep{akkerman2024dynamic}. Due to the lack of scalable hybrid benchmarks, we extend two common benchmarks to the continuous-to-discrete paradigm: H-Cacla and H-DNC. In addition, we use HyAR, a hybrid version of LAR \citep{Li2022}. While DGRL, Cacla, DNC, H-Cacla, and H-DNC scale to all action space sizes, the other benchmarks are restricted by exponentially increasing memory requirements. For details, we refer to App.~\ref{sec:baselines}. All results are medians over 10 training seeds, with shading showing the interquartile range.

\textbf{Empirical Performance and Scalability.} DGRL demonstrates superior performance across test domains, consistently achieving higher final rewards with faster convergence and lower variance than the benchmarks. These advantages are especially pronounced in large-scale and irregularly structured environments, where limitations of grid-based or exhaustive search methods become most visible.

\begin{figure}[th]
    \includegraphics[width=0.261\textwidth]{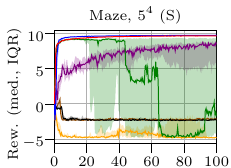}%
    \hfill%
    \includegraphics[width=0.239\textwidth]{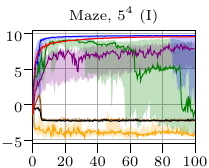}%
    \hfill%
    \includegraphics[width=0.261\textwidth]{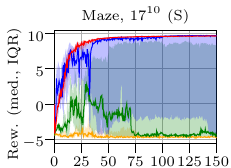}%
    \hfill%
    \includegraphics[width=0.239\textwidth]{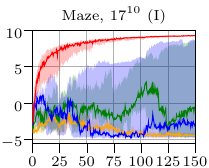}%
    \hfill%
    \par\smallskip
    \includegraphics[width=0.263\textwidth]{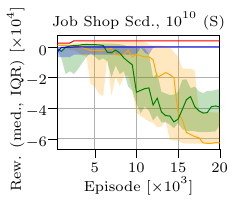}%
    \hfill%
    \includegraphics[width=0.237\textwidth]{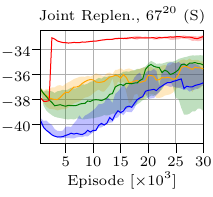}%
    \hfill%
    \includegraphics[width=0.265\textwidth]{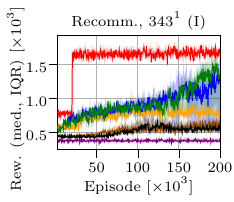}%
    \hfill%
    \includegraphics[width=0.235\textwidth]{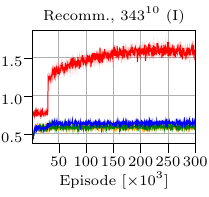}%
    \hfill%
    \par\smallskip
    \includegraphics[width=\textwidth]{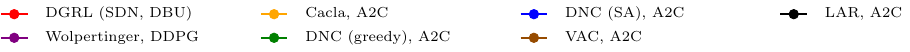}
    \vspace{-6mm}
    \caption{Results for discrete environments, median over 10 random seeds. Titles indicate size and type (structured or irregular) of action space. Legend indicates mapping method and RL algorithm.}
    \label{fig:results_discrete}
    \vspace{-2mm}
\end{figure}

\textbf{Discrete Action Spaces.}
We evaluate DGRL across discrete configurations with both structured (S) and irregularly structured (I) topologies (Fig.~\ref{fig:results_discrete}). While DGRL and DNC perform competitively in small-scale mazes, the performance of DNC collapses in high-dimensional, irregularly structured spaces. While DNC is restricted to coordinate-aligned search, DGRL’s volumetric exploration and regression-based updates maintain stability, outperforming DNC by up to 66\% and validating Prop.~\ref{prop:volumetric_supportexploration} and~\ref{thm:variance}. This scalability gap widens in the Recommender environment: while DNC succeeds in small settings, it fails to scale, allowing DGRL to outperform it by 147\%. Notably, DGRL maintains superior sampling density in irregularly structured environments, validating the $L_\infty$ search volume.

Across both structured and irregular spaces, DGRL exhibits consistent convergence behavior. In contrast, benchmarks like Wolpertinger and DNC suffer from significant variance and performance degradation. Wolpertinger displays performance gaps to DGRL already for small-scale environments, in addition to higher training variance. Wolpertinger’s failure stems from inefficient exploration in irregular manifolds and DDPG’s susceptibility to local optima, both of which DGRL mitigates through its volumetric exploration and denoised regression targets. Finally, methods like Wolpertinger and LAR require an apriori lookup table, which is computationally prohibitive for $\geq 10^{10}$ actions, whereas DGRL scales without explicit enumeration.

\begin{figure}[th]
    \includegraphics[width=0.254\textwidth]{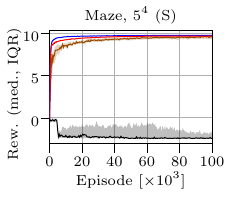}%
    \hfill%
    \includegraphics[width=0.234\textwidth]{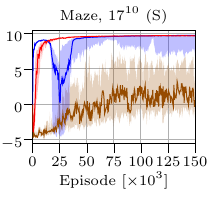}%
    \hfill%
    \includegraphics[width=0.272\textwidth]{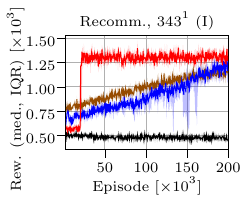}%
    \hfill%
    \includegraphics[width=0.234\textwidth]{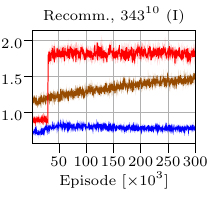}%
    \hfill%
    \par\smallskip
    \includegraphics[width=\textwidth]{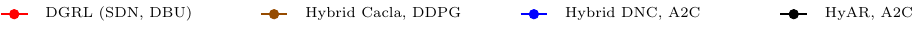}
    \vspace{-6mm}
    \caption{Results for hybrid environments, median over 10 random seeds. Titles indicate size and type (structured or irregular) of action space. Legend indicates mapping method and RL algorithm.}
    \label{fig:results_hybrid}
    \vspace{-4mm}
\end{figure}

\textbf{Hybrid Action Spaces.}
As shown in Fig.~\ref{fig:results_hybrid}, DGRL maintains its performance advantage in hybrid domains, outperforming the best-performing benchmarks by 11\% to 129\%. As in discrete environments, these performance gaps become especially visible in large-scale and irregularly structured environments. Our results validate the theoretical benefits of the joint manifold search: whereas hierarchical baselines like H-DNC are restricted by low-level choices conditioned on fixed, potentially sub-optimal high-level parameters, DGRL performs a coordinated selection of both action components. This joint optimization, combined with DBU's lack of a hard regret floor (Prop.~\ref{prop:regret_appendix}), prevents high-level selection errors from propagating into and biasing the low-level policy. By eliminating hierarchical dependencies while maintaining coordinated action selection, DGRL achieves a level of scalability and stability out of reach for traditional decomposed architectures.

\textbf{Computational Complexity.}
We evaluate average step times across varying action space cardinalities in Table~\ref{tab:results_compute}. While the inference times of Wolpertinger and LAR are initially lower due to their reliance on pre-computed lookup tables, these methods become computationally and memory-prohibitive as the action space expands. In contrast, while DNC's step times scale poorly with action space dimensionality, DGRL exhibits near-constant time complexity relative to the action space size. By leveraging independent sampling and parallelized local searches, DGRL maintains efficient neighborhood evaluation regardless of cardinality, offering a scalable alternative to exhaustive or grid-constrained search paradigms.

\begin{table}[th]
    \centering
    \vspace{-2mm}
    \caption{Step times of algorithms in millisec. Averaged over 1000 episodes in the Maze environment.}
    \label{tab:results_compute}
    \footnotesize
    \begin{tabular}{r|c|c|c|c|c}
        $|\calA|$ & DGRL & DNC (SA) & Cacla & Wolp. & LAR \\
        \hline
        $5^{5}$ & 3.3 & 12.8 & 0.1 & 1.1 & 0.2 \\
        $10^{10}$ & 3.4 & 19.4 & 0.1 & - & - \\
        $20^{20}$ & 3.4 & 33.6 & 0.1 & - & - \\
        $50^{50}$ & 3.2 & 80.2 & 0.1 & - & -
    \end{tabular}
    \vspace{-2mm}
\end{table}

\textbf{Ablation Studies and Metric Sensitivity.}
We isolate the contributions of SDN and DBU to evaluate their individual impacts on stability and exploration (Fig.~\ref{fig:results_ablation}). We find that full DGRL outperforms all other algorithmic setups, highlighting the synergies between SDN and DBU. SDN drives exhaustive local exploration, covering the action manifold more uniformly than DNC's axial search. DBU provides a global, low-variance learning signal that stabilizes updates relative to standard policy gradients (A2C) and prevents local optima entrapment common in DDPG. These results highlight the inherent links between SDN and DBU: SDN performs stochastic action selection based on distances, while DBU minimizes distances to high-value target regions, creating a volumetric projection loop.

Furthermore, evaluating different distance metrics confirms the strategic advantage of the Chebyshev distance ($L_\infty$). While Euclidean and Manhattan metrics are viable, they exhibit higher variance and require dimension-specific tuning of the radius $L$. Empirically underscoring the dimensional invariance of its search volume (Prop.~\ref{prop:volumetric_supportexploration}), our results validate the $L_\infty$ hypercube as the optimal geometry for distance-guided exploration and inference.

\section{Conclusion}\label{sec:conclusion}
We propose Distance-Guided Reinforcement Learning (DGRL), scaling RL to large discrete and hybrid action spaces with up to $10^{20}$ candidates. DGRL encompasses Sampled Dynamic Neighborhoods, enabling volumetric stochastic refinement of the continuous proto-action using Chebyshev constraints, and Distance-Based Updates, transforming the actor update to a stable regression task and decoupling gradient variance from action space cardinality. Our theoretical results show the dimensional invariance of Chebyshev-based sampling and convergence of Distance-Based Updates in structured domains. Empirically, DGRL outperforms state-of-the-art benchmarks by up to 66\% across multiple (irregularly) structured environments, while displaying faster and more stable convergence. Beyond these results, DGRL offers a foundation for metric-aware RL, suggesting that leveraging problem structure and distance-guided sampling can replace categorical representations.


\newpage

\begin{ack}
We thank the BAIS research group at TUM for valuable comments and discussions. The work of Heiko Hoppe was supported by the Munich Data Science Institute with a Linde/MDSI PhD Fellowship.
\end{ack}

\bibliographystyle{plainnat}
\bibliography{DGRL_bib}


\newpage

\appendix

\section{Theoretical Analysis of DGRL}
\label{app:theory}

\subsection{Dimensional Invariance of Chebyshev Neighborhoods}
\label{app:chebyshev_invariance}

\begin{proposition}[Dimensional Invariance of Chebyshev Neighborhoods]
\label{prop:chebyshev_invariance}
Consider a search neighborhood $\mathcal{N}$ constructed by independent sampling within a range $\pm \epsilon$ across $N$ dimensions. The geometry of $\mathcal{N}$ corresponds to a ball under the $L_\infty$ (Chebyshev) metric. While the $L_2$ radius required to encompass the support corners grows as $\mathcal{O}(\epsilon\sqrt{N})$, the $L_\infty$ radius remains invariant to $N$. This ensures the trust region remains tightly coupled to the sampled support, preventing the volume expansion inherent to Euclidean metrics in high dimensions.
\end{proposition}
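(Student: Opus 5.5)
I would begin by making the statement precise: identify the sampled neighbourhood as the support of the product distribution that independent coordinate sampling induces. With proto-action $\hat a\in\mathbb{R}^N$, each coordinate $a'_n$ ranges over an interval $[\hat a_n-\epsilon,\hat a_n+\epsilon]$, intersected with $\mathbb{Z}$ in the discrete case. The support is therefore the Cartesian product
\[
\mathcal{N}=\prod_{n=1}^N[\hat a_n-\epsilon,\hat a_n+\epsilon]=\{a:\max_n|a_n-\hat a_n|\le\epsilon\}=B_\infty(\hat a,\epsilon).
\]
This follows directly from the definition $\|x\|_\infty=\max_n|x_n|$: a point lies in the product exactly when every coordinate deviation is at most $\epsilon$. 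This gives the first claim, that $\mathcal{N}$ is a Chebyshev ball, and it holds for every $N$. The smallest $L_\infty$ ball centred at $\hat a$ that contains $\mathcal{N}$ has radius $\sup_{a\in\mathcal{N}}\|a-\hat a\|_\infty=\epsilon$, and this radius does not depend on $N$.

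For the Euclidean side, I would compute the smallest $L_2$ ball centred at $\hat a$ that contains $\mathcal{N}$. Its radius is $\sup_{a\in\mathcal{N}}\|a-\hat a\|_2$. Since $\sum_n (a_n-\hat a_n)^2\le N\epsilon^2$, the radius is at most $\epsilon\sqrt N$. The corner $a=\hat a+\epsilon(1,\dots,1)$ attains this value, so the radius equals exactly $\epsilon\sqrt N$, which is $\mathcal{O}(\epsilon\sqrt N)$ and is tight. In the integer setting the corners $\hat a\pm\epsilon\mathbf{1}$ may not be lattice points when $\hat a$ is non-integral. I would handle this by showing that the lattice corner of the box satisfies each $|a_n-\hat a_n|\ge\epsilon-1$, which gives the bound $\Theta(\epsilon\sqrt N)$ for $\epsilon\ge2$. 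For $L=\epsilon\in\mathbb{N}$ with integer $\hat a$ after scaling, the corners are exact.

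To give meaning to the phrase "preventing volume expansion", I would compare the volumes of the two enclosing regions. The box has $\mu(B_\infty(\hat a,\epsilon))=(2\epsilon)^N$, matching Prop.~\ref{prop:volumetric_supportexploration}. The circumscribing Euclidean ball has volume $\frac{\pi^{N/2}}{\Gamma(N/2+1)}(\epsilon\sqrt N)^N$. Applying Stirling's formula shows that the ratio of ball volume to box volume behaves like $(\pi e/2)^{N/2}$ up to polynomial factors, so it grows exponentially. A trust region defined with an $L_2$ radius large enough to contain the sampled support therefore holds exponentially more volume, and exponentially more lattice points, than the support itself. Conversely, a uniform density on the Euclidean ball puts vanishing mass on the sampled box. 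This is the formal content of "tight coupling": the box fills a fraction $1$ of its $L_\infty$ hull, but only an exponentially small fraction of its $L_2$ hull.

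The main obstacle is interpretation rather than computation. "Invariant to $N$" and "tightly coupled" are informal, so the key step is committing to the quantitative surrogate above: the enclosing radius together with the volume ratio of the support to its enclosing ball. Once that choice is made, the remaining steps are short. The only genuinely technical piece is the Stirling estimate. For that I would use $\Gamma(N/2+1)\ge (N/(2e))^{N/2}$, which gives the upper bound $(2\pi e)^{N/2}\epsilon^N$ on the ball volume. I would then verify the exponential growth of the ratio with the matching lower bound on the ball volume, taken from the same Stirling expansion.
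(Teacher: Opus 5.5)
Your proof is correct, and its core matches the paper's. You identify the product support with $B_\infty(\hat a,\epsilon)$, read off $\sup_{a\in\mathcal N}\|a-\hat a\|_\infty=\epsilon$, and use the all-$\epsilon$ corner to get the Euclidean covering radius $\epsilon\sqrt N$.

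Where you differ is in quantifying the last clause.
\begin{itemize}
\item The paper fixes the $L_2$ radius at $\epsilon$ (the inscribed ball). It notes that this ball captures only the fraction $(\pi/4)^{N/2}/\Gamma(N/2+1)\approx(\pi e/2N)^{N/2}$ of the cube, so an $N$-independent Euclidean radius loses coverage of the sampled support super-exponentially.
\item You take the circumscribing ball of radius $\epsilon\sqrt N$. You show its volume exceeds the cube's by a factor of order $(\pi e/2)^{N/2}/\sqrt N$, so any Euclidean radius that covers the support encloses exponentially much unsupported volume.
\end{itemize}
Your version is the more literal formalization of ``volume expansion''. Your Stirling bounds do the job: $\Gamma(x+1)\ge(x/e)^x$ in one direction and $\Gamma(x+1)\le\sqrt{2\pi x}\,(x/e)^x e^{1/(12x)}$ in the other. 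The paper's version is closer to the ``vanishing sampling density'' wording of the main text. Together they show the Euclidean ball fails both ways: it under-covers at fixed radius and over-covers at the covering radius, while the Chebyshev ball matches the support exactly.

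Your lattice-corner discussion for non-integral $\hat a$ goes beyond the paper, which treats only the continuous box. It can be sharpened: for integer $\epsilon$, the farthest admissible integer in each coordinate lies at distance at least $\epsilon-\tfrac12$. This removes your restriction $\epsilon\ge 2$ and covers the radius $L=1$ used in several experiments.

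The aside about ``exponentially more lattice points'' does not follow from the volume ratio alone. The box has $(2\epsilon+1)^N$ lattice points, exponentially more than its volume $(2\epsilon)^N$. The claim is true, but it needs a separate Chernoff-type count. The proposition does not depend on it.
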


\begin{proof}
Let the search neighborhood $\mathcal{N} \subset \mathbb{R}^N$ be defined by independent sampling in each dimension $i \in \{1, \dots, N\}$ within a local range $[-\epsilon, \epsilon]$ relative to a center point $a_0$. A point $a \in \mathcal{N}$ can be represented as $a = a_0 + \delta$, where $\delta \in [-\epsilon, \epsilon]^N$.

By definition, the $L_p$ norm of the displacement vector $\delta$ is:
\[ \|\delta\|_p = \left( \sum_{i=1}^N |\delta_i|^p \right)^{1/p} \]

\textbf{1. The $L_2$ case (Euclidean):}
Consider the corner points of the neighborhood where $|\delta_i| = \epsilon$ for all $i$. The Euclidean distance from the center to these points is:
\[ \|\delta\|_2 = \sqrt{\sum_{i=1}^N \epsilon^2} = \sqrt{N \epsilon^2} = \epsilon \sqrt{N} \]
As the dimensionality $N$ increases, the radius $R_2 = \epsilon \sqrt{N}$ required to cover the sampled neighborhood grows at a rate of $\mathcal{O}(\sqrt{N})$. If we fix the radius to be independent of $N$, the fraction of the hypercube volume captured by the $L_2$ ball vanishes as $( \frac{\pi e}{2N} )^{N/2}$, leading to the volume expansion problem where the policy update is informed by regions without sampled support.

\textbf{2. The $L_\infty$ case (Chebyshev):}
The $L_\infty$ norm is defined as the limit of the $p$-norm as $p \to \infty$:
\[ \|\delta\|_\infty = \max_{i} |\delta_i| \]
For any point $a$ in the sampled neighborhood $\mathcal{N}$, by construction $|\delta_i| \leq \epsilon$. Therefore:
\[ \sup_{a \in \mathcal{N}} \|a - a_0\|_\infty = \epsilon \]
The radius $R_\infty = \epsilon$ is strictly invariant to $N$. 

\textbf{3. Conclusion on Trust Region Coupling:}
Because the $L_\infty$ ball is geometrically congruent to the hypercubic sampling distribution, the trust region defined by $B_\infty(a_0, \epsilon)$ maintains a constant density of support relative to its volume. This allows Sampled Dynamic Neighborhoods (SDN) to maintain a stable Lipschitz bound on the action selection in structured problems regardless of the cardinality or dimensionality of the action space (see App~\ref{app:latent_lipschitz} for the Lipschitz assumption in structured action spaces).
\end{proof}

\subsection{Consistency of Sampled Neighborhoods}
\label{app:consistency}

SDN approximates the maximization of the action-value function over a discrete neighborhood
$\mathcal{B}(\hat a)$ by Monte Carlo sampling. We show that this procedure yields a consistent estimator
of the neighborhood value.

\begin{proposition}[Consistency of SDN Estimator]
\label{thm:consistency_appendix}
Let $\mathcal{B}(\hat a) \subset \mathcal{A}$ be a finite neighborhood of candidate actions.
Let $P(\cdot \mid \hat a)$ be a sampling distribution over $\mathcal{B}(\hat a)$ with full support, i.e.,
\[
P(a \mid \hat a) > 0 \quad \forall a \in \mathcal{B}(\hat a).
\]
Let $a^\star_{\mathrm{local}} := \arg\max_{a \in \mathcal{B}(\hat a)} Q(s,a)$ and define the SDN estimator
\[
\hat Q_{\mathrm{SDN}}(s,\hat a) := \max_{1 \le k \le K} Q(s,a'_k),
\quad a'_k \sim P(\cdot \mid \hat a) \ \text{i.i.d.}
\]
Then $\hat Q_{\mathrm{SDN}}(s,\hat a)$ is a consistent estimator of
$\max_{a \in \mathcal{B}(\hat a)} Q(s,a)$, i.e.,
\[
\hat Q_{\mathrm{SDN}}(s,\hat a)
\;\xrightarrow{a.s.}\;
\max_{a \in \mathcal{B}(\hat a)} Q(s,a)
\quad \text{as } K \to \infty.
\]
\end{proposition}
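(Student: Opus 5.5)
The plan is to show that, almost surely, the maximizer is eventually sampled. From then on the estimator equals the target value exactly, which gives almost-sure convergence. Throughout, $s$ and $\hat a$ are fixed, so $Q(s,\cdot)$ is a deterministic function on the finite set $\mathcal{B}(\hat a)$. Write $Q^\star := \max_{a\in\mathcal{B}(\hat a)} Q(s,a)$, which is attained at $a^\star_{\mathrm{local}}$. Since the maximum is over finitely many points, ties cause no problem: fix any one maximizer.

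First I will record two elementary facts. Every sample lies in $\mathcal{B}(\hat a)$, so $\hat Q_{\mathrm{SDN}}(s,\hat a)\le Q^\star$ for every $K$. Moreover, $K\mapsto \hat Q_{\mathrm{SDN}}$ is nondecreasing along a single infinite i.i.d. sequence $a'_1,a'_2,\dots$, since the $K$-sample estimator is the running maximum of $Q(s,a'_k)$. Hence the limit exists pointwise and is at most $Q^\star$.

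Next, let $p := P(a^\star_{\mathrm{local}}\mid \hat a)>0$, which holds by the full-support assumption. Define the event $E_K := \{a'_k\neq a^\star_{\mathrm{local}}\ \forall k\le K\}$. By independence, $\Pr(E_K)=(1-p)^K$. The events $E_K$ are decreasing in $K$, so $\Pr(\bigcap_K E_K)=\lim_K (1-p)^K = 0$. Alternatively, $\sum_K \Pr(E_K)<\infty$, and Borel--Cantelli gives the same conclusion. Thus, with probability one there is a finite random index $K_0$ with $a'_{K_0}=a^\star_{\mathrm{local}}$. For all $K\ge K_0$ this gives $\hat Q_{\mathrm{SDN}}(s,\hat a)\ge Q(s,a^\star_{\mathrm{local}})=Q^\star$. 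Combined with the upper bound, $\hat Q_{\mathrm{SDN}}(s,\hat a)=Q^\star$ eventually, so it converges almost surely. As a byproduct we get the rate $\Pr(\hat Q_{\mathrm{SDN}}\neq Q^\star)\le (1-p_{\min})^K$ with $p_{\min}=\min_{a}P(a\mid\hat a)$, which I will remark on.

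The main subtlety is formal rather than technical. Almost-sure convergence ``as $K\to\infty$'' requires the estimators for different $K$ to live on one probability space. I will therefore state explicitly that they are built from a single infinite i.i.d. sequence, which is the coupling implicit in Algorithm~\ref{alg:sdn}. If instead fresh samples are drawn for each $K$, the same bound $(1-p)^K$ is summable, and Borel--Cantelli still yields almost-sure convergence; I will note this in a sentence. Finally, I will remark that the SDN sampling distribution with $\tau_s>0$ indeed has full support on the integer points of $\mathcal{B}(\hat a)$, because each coordinate weight $L-|\hat a_n-a'_n|+\tau_s$ is strictly positive. This means the hypothesis applies to the actual method.
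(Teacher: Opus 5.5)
Your proof is correct and uses the same key fact as the paper's: full support gives $p=P(a^\star_{\mathrm{local}}\mid\hat a)>0$, so the maximizer is missed in all $K$ draws with probability $(1-p)^K\to 0$. The paper stops at $\mathbb{P}\bigl(\hat Q_{\mathrm{SDN}}(s,\hat a)=\max_{a\in\mathcal{B}(\hat a)}Q(s,a)\bigr)\to 1$, which is only convergence in probability, and then asserts almost-sure convergence without justification; it also writes the miss probability as an equality with $(1-p)^K$, which fails under ties. Your monotone running-maximum argument with the decreasing events $E_K$, or Borel--Cantelli via summability of $(1-p)^K$ when fresh draws are used for each $K$, supplies exactly the missing step to almost-sure convergence, and your use of $\le$ handles ties correctly.
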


\begin{proof}
Let $a^\star_{\mathrm{local}} := \arg\max_{a \in \mathcal{B}(\hat a)} Q(s,a)$.
By the full-support assumption, the probability of sampling $a^\star_{\mathrm{local}}$ in a single draw is
\[
p := P(a^\star_{\mathrm{local}} \mid \hat a) > 0.
\]
Since samples $\{a'_k\}_{k=1}^K$ are drawn independently, the probability that
$a^\star_{\mathrm{local}}$ is not sampled in $K$ draws is $(1-p)^K$.
Therefore,
\[
\mathbb{P}\!\left(
\hat Q_{\mathrm{SDN}}(s,\hat a) \neq Q(s,a^\star_{\mathrm{local}})
\right)
= (1-p)^K.
\]
As $K \to \infty$, we have $(1-p)^K \to 0$, implying
\[
\mathbb{P}\!\left(
\hat Q_{\mathrm{SDN}}(s,\hat a) = Q(s,a^\star_{\mathrm{local}})
\right)
\to 1.
\]
Hence, $\hat Q_{\mathrm{SDN}}(s,\hat a)$ converges almost surely to
$\max_{a \in \mathcal{B}(\hat a)} Q(s,a)$ as $K \to \infty$.
\end{proof}

\begin{corollary}[Finite-Sample Guarantee]
For any $\delta \in (0,1)$, if $K \ge \frac{\log(1/\delta)}{p},$ then $\mathbb{P}\!\left(\hat Q_{\mathrm{SDN}}(s,\hat a) = \max_{a \in \mathcal{B}(\hat a)} Q(s,a)\right) \ge 1 - \delta.$
\end{corollary}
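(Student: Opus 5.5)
The corollary follows from the failure-probability identity in the proof of Proposition~\ref{thm:consistency_appendix}. That proof showed that with $K$ i.i.d.\ draws from $P(\cdot\mid\hat a)$, the event $\hat Q_{\mathrm{SDN}}(s,\hat a)\neq\max_{a\in\mathcal{B}(\hat a)}Q(s,a)$ is contained in the event that $a^\star_{\mathrm{local}}$ is never drawn. Its probability is therefore at most $(1-p)^K$, where $p=P(a^\star_{\mathrm{local}}\mid\hat a)>0$.

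Strictly, only this containment holds, not the equality stated there. Ties in $Q$, or other maximizers in $\mathcal{B}(\hat a)$, can only help. So I will use the inequality
\[
\mathbb{P}\bigl(\hat Q_{\mathrm{SDN}}(s,\hat a)=\max_{a\in\mathcal{B}(\hat a)}Q(s,a)\bigr)\ \ge\ 1-(1-p)^K .
\]

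It then suffices to show $(1-p)^K\le\delta$ whenever $K\ge\log(1/\delta)/p$. I will use the elementary inequality $1-p\le e^{-p}$ for $p\in[0,1]$, which follows from convexity of $x\mapsto e^{-x}$ (its tangent line at $0$ is $1-x$). This gives $(1-p)^K\le e^{-pK}$. The hypothesis on $K$ gives $pK\ge\log(1/\delta)$, hence $e^{-pK}\le\delta$. Combining the two displays yields the claim.

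There is no real obstacle. The only points needing care are the following:
\begin{itemize}
\item Replace the equality from the proposition's proof with the inequality above, so that multiple maximizers are handled correctly.
\item Treat the degenerate case $p=1$ separately. Then $(1-p)^K=0$ for every $K\ge1$, so the bound holds trivially.
\item Note that $K$ must be an integer. Any integer $K\ge\lceil\log(1/\delta)/p\rceil$ satisfies the hypothesis.
\end{itemize}
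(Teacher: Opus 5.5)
Your proof is correct and is the same argument the paper leaves implicit: the corollary follows from the failure bound $(1-p)^K$ in the proof of Proposition~\ref{thm:consistency_appendix}, combined with $(1-p)^K\le e^{-pK}\le\delta$. Replacing the paper's equality with the containment $\{\hat Q_{\mathrm{SDN}}(s,\hat a)\neq\max_{a\in\mathcal{B}(\hat a)}Q(s,a)\}\subseteq\{a^\star_{\mathrm{local}}\text{ never drawn}\}$ is a correct minor fix, since the equality fails when $Q(s,\cdot)$ has several maximizers in $\mathcal{B}(\hat a)$; your separate $p=1$ case is unnecessary, because $1-p\le e^{-p}$ already covers it.
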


\subsection{Geometric Support and Exploration Coverage}
\label{app:volumetric_supportexploration}

This section provides the formal measure-theoretic proof comparing the exploration manifolds of SDN and coordinate-based search methods.

\begin{proposition}[Volumetric vs.\ Axial Candidate Sets]\label{prop:volumetric_supportexploration_app}
Let $L\in\mathbb{N}$ be the Chebyshev radius and $n$ the latent dimension.
Define the SDN candidate set
$\mathcal{C}_{\mathrm{SDN}} := B_\infty(\hat z, L)\cap\mathbb{Z}^n$
and the axial-search candidate set
$\mathcal{C}_{\mathrm{axial}} := \bigl(\bigcup_{i=1}^{n}\{\hat z + \alpha e_i:\alpha\in[-L,L]\}\bigr)\cap\mathbb{Z}^n$.
For $n\ge 2$, $|\mathcal{C}_{\mathrm{SDN}}| = (2L+1)^{n}, \qquad
|\mathcal{C}_{\mathrm{axial}}| \le 2nL + 1$,i.e., the SDN candidate set is exponential in $n$ while axial sets are
only linear in $n$. Equivalently, the underlying continuous candidate
\emph{generation regions} have Lebesgue measures
$\mu(B_\infty(\hat z,L)) = (2L)^{n}$ and $\mu(S_{\mathrm{axial}}) = 0$;
the dimensional-richness gap of the discrete sets is inherited from this
gap in the generating manifolds.
\end{proposition}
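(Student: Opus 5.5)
The plan is to reduce every claim to a one-dimensional count along a single coordinate, and then combine coordinates either by a product (for the box) or by a union bound (for the axes). Throughout I will take $\hat z\in\mathbb{Z}^n$, i.e.\ the proto-action after the scaling step has been snapped to the integer lattice. Without this assumption the exact identity $|\mathcal{C}_{\mathrm{SDN}}|=(2L+1)^n$ fails: a non-integer coordinate $\hat z_i$ leaves only $2L$ integers in $[\hat z_i-L,\hat z_i+L]$. I would state this convention explicitly and remark that otherwise the count lies between $(2L)^n$ and $(2L+1)^n$, which is still exponential in $n$.

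\textbf{SDN set.} Since $B_\infty(\hat z,L)=\prod_{i=1}^n[\hat z_i-L,\hat z_i+L]$, intersecting with $\mathbb{Z}^n$ factorizes coordinatewise:
\[
\mathcal{C}_{\mathrm{SDN}}=\prod_{i=1}^n\bigl([\hat z_i-L,\hat z_i+L]\cap\mathbb{Z}\bigr).
\]
Each factor is $\{\hat z_i-L,\dots,\hat z_i+L\}$, which has $2L+1$ elements, so the product has $(2L+1)^n$ elements.

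\textbf{Axial set.} Write $S_i:=\{\hat z+\alpha e_i:\alpha\in[-L,L]\}$, so that $S_{\mathrm{axial}}=\bigcup_i S_i$. A lattice point of $S_i$ has the form $\hat z+\alpha e_i$ with $\alpha\in\{-L,\dots,L\}$, so $|S_i\cap\mathbb{Z}^n|=2L+1$. Each of these sets contains $\hat z$, so the union counts $\hat z$ once and at most $2L$ further points per axis. Inclusion--exclusion, or simply the union bound after removing the common centre, then gives $|\mathcal{C}_{\mathrm{axial}}|\le n\cdot 2L+1$. In fact this holds with equality, because for $i\neq j$ the sets $S_i$ and $S_j$ meet only at $\hat z$. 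The exponential-versus-linear comparison follows immediately, and for $n\ge 2$ and $L\ge1$ one has $(2L+1)^n > 2nL+1$, for instance by the binomial expansion.

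\textbf{Lebesgue measures.} For the box, $\mu\bigl(\prod_i[\hat z_i-L,\hat z_i+L]\bigr)=(2L)^n$ by the product formula for Lebesgue measure on rectangles. For the axial region, each $S_i$ is a segment, i.e.\ contained in an affine line, which is a $1$-dimensional affine subspace. When $n\ge2$ such a line is contained in a hyperplane of measure zero, or can be covered by boxes of width $\varepsilon$ in the directions transverse to $e_i$, with total volume $\to 0$. Countable subadditivity over the $n$ segments then gives $\mu(S_{\mathrm{axial}})=0$.

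The only real obstacle is the integrality of $\hat z$ discussed at the start; with the convention $\hat z\in\mathbb{Z}^n$ every step above is routine counting. The closing phrase that the discrete gap is ``inherited'' from the generating regions is interpretive. I would justify it only informally: the lattice points of an $n$-dimensional box grow like its volume, whereas lattice points on a finite union of segments grow like their total length $2nL$.
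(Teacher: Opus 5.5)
Your proposal is correct and follows the paper's proof essentially step for step: a coordinatewise product count for the box, a union of $n$ segments sharing only the centre for the axial bound, and the rectangle volume formula together with measure-zero segments and countable subadditivity for the Lebesgue claims. Your explicit convention $\hat z\in\mathbb{Z}^n$ is exactly what the paper assumes without stating it when it writes $B_\infty(\hat z,L)\cap\mathbb{Z}^n=\prod_{i=1}^{n}\bigl(\{\hat z_i\}+\{-L,\dots,L\}\bigr)$, so stating it, and noting that the axial bound is then attained with equality, sharpens the argument rather than changing it.
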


\begin{proof}
\textbf{(i) Discrete cardinalities.}
$B_\infty(\hat z,L)\cap\mathbb{Z}^n
=\prod_{i=1}^{n}\bigl(\{\hat z_i\}+\{-L,\dots,L\}\bigr)$
contains $(2L+1)^n$ integer points, growing exponentially in $n$. Each
axial line segment $\{\hat z + \alpha e_i:\alpha\in[-L,L]\}\cap\mathbb{Z}^n$
contributes at most $2L+1$ integer points, and the union over $i=1,\dots,n$
contains at most $n(2L+1) - (n-1) = 2nL+1$ distinct points (subtracting
the shared centre $\hat z$). Hence
$|\mathcal{C}_{\mathrm{axial}}|\le 2nL+1$.

\textbf{(ii) Continuous generation regions.}
$B_\infty(\hat z,L)$ is a hypercube of side length $2L$, so
$\mu(B_\infty(\hat z,L)) = (2L)^{n}>0$. Each axial line $L_i = \{\hat z+\alpha e_i\}$
is the image of an interval under an injective affine map into $\mathbb{R}^n$
with rank-$1$ Jacobian; for $n\ge 2$, $\mu(L_i)=0$ and by countable
additivity $\mu\bigl(\bigcup_i L_i\bigr)=0$.

\textbf{(iii) Why the continuous gap matters in the discrete setting.}
Both candidate sets are finite and hence have zero Lebesgue measure on
$\mathbb{R}^n$; the operational distinction between SDN and axial search
is therefore not about the measure of the \emph{evaluated} candidates,
but about the measure of the \emph{generation manifold} from which they
are drawn. SDN's generation region is full-dimensional, so its discrete
candidates can carry arbitrary off-axis directions; axial generation is
$1$-dimensional in $\mathbb{R}^n$, so its discrete candidates are
restricted to coordinate axes. The cardinality gap \(
|\mathcal{C}_{\mathrm{SDN}}| = (2L+1)^{n}, \qquad
|\mathcal{C}_{\mathrm{axial}}| \le 2nL + 1,
\) is the discrete consequence of this dimensional
asymmetry.
\end{proof}

\begin{remark}[Boltzmann sampling]
The SDN sampling distribution $P(z)\propto\exp(Q(s,z)/\tau)$ is supported
on $\mathcal{C}_{\mathrm{SDN}}$; as $\tau\to\infty$ it converges to
the uniform distribution over $\mathcal{C}_{\mathrm{SDN}}$, confirming that
all $(2L+1)^n$ multi-coordinate combinations receive non-vanishing
sampling probability.
\end{remark}

\subsection{Removal of Action-Cardinality Dependence}
\label{app:action_cardinality}

\begin{theorem}[Removal of Action-Cardinality Dependence]\label{thm:variance_app}
Let $\varphi_\theta:\mathcal{S}\to\mathcal{A}'\subseteq\mathbb{R}^{N}$
be the actor and $J_\theta(s):=\partial\varphi_\theta(s)/\partial\theta$
its parameter Jacobian. Assume $\varphi_\theta$ is $G$-Lipschitz in
$\theta$ uniformly in $s$, so that $\|J_\theta(s)\|_{2}\le G$. Then for
every state $s$, the per-sample DBU gradient noise --- the covariance of
$g_{\mathrm{DBU}}$ over the randomness of the target $\bar a\mid s$ ---
satisfies
\(
\operatorname{Tr}\!\bigl(\operatorname{Cov}[g_{\mathrm{DBU}}\mid s]\bigr)
\;\le\; G^{2}\,\operatorname{Tr}\!\bigl(\operatorname{Cov}[\bar a\mid s]\bigr).
\)
Both sides are independent of $|\mathcal{A}|$: the RHS is a
property of the target distribution $\bar a\mid s$, which is constructed
from a fixed number $M$ of candidates and lies in the relaxed action
space $\mathcal{A}'\subseteq\mathbb{R}^{N}$.
\end{theorem}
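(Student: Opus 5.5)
The plan is to write the DBU gradient explicitly and reduce everything to a covariance identity for an affine function of the target. For the regression loss $J(\theta)=\|\varphi_\theta(s)-\bar a\|_2^2$ (up to the factor $2$, which we absorb or carry as a harmless constant), the chain rule gives $g_{\mathrm{DBU}} = J_\theta(s)^{\top}\bigl(\varphi_\theta(s)-\bar a\bigr)$. Here $\bar a$ is treated as a stop-gradient target, exactly as in Algorithm~\ref{alg:dbu}, so $\bar a$ contributes no term to the derivative. We condition on $s$ and on the current parameters $\theta$. Then $\varphi_\theta(s)$ and $J_\theta(s)$ are deterministic, and the only randomness comes from the $M$ Gaussian perturbations, the rounding, and hence from $\bar a\mid s$.

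First, invoke two elementary facts: covariance is invariant under deterministic shifts, and $\operatorname{Cov}[AX]=A\operatorname{Cov}[X]A^{\top}$ for a deterministic matrix $A$. With $A=-J_\theta(s)^{\top}$ this gives
\[
\operatorname{Cov}[g_{\mathrm{DBU}}\mid s]=J_\theta(s)^{\top}\operatorname{Cov}[\bar a\mid s]\,J_\theta(s).
\]
Well-definedness needs a short check that $\bar a\mid s$ has finite second moments. This holds because $\bar a$ is a convex combination of the candidates $a''_{(i)}$, so $\|\bar a\|\le\max_i\|a''_{(i)}\|$. Each candidate is a rounded Gaussian and therefore has finite second moment; alternatively, boundedness of $\mathcal{A}'$ after clipping suffices.

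Second, bound the trace. Write $C=\operatorname{Cov}[\bar a\mid s]\succeq 0$ with eigendecomposition $C=\sum_j\lambda_j u_ju_j^{\top}$, $\lambda_j\ge 0$. Then
\[
\operatorname{Tr}(A^{\top}CA)=\sum_j\lambda_j\|A^{\top}u_j\|_2^2\le\|A\|_2^2\sum_j\lambda_j=\|A\|_2^2\operatorname{Tr}(C).
\]
Setting $A=J_\theta(s)$ and using the Lipschitz assumption $\|J_\theta(s)\|_2\le G$ yields the claimed inequality. The link between the Lipschitz assumption and the operator-norm bound is already stated in the hypothesis, so no further argument is needed; if desired, it follows from the definition of the derivative as a limit of difference quotients.

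Finally, address the independence claim. The RHS depends only on the law of $\bar a\mid s$, which is determined by $\varphi_\theta(s)$, $\sigma_b$, $M$, $\tau_b$ and the critic values at the candidates. Moreover, since $\bar a$ lies in the convex hull of the rounded candidates, $\operatorname{Tr}\operatorname{Cov}[\bar a\mid s]\le \mathbb{E}\|\bar a-\varphi_\theta(s)\|^2\le\mathbb{E}\max_i\|a''_{(i)}-\varphi_\theta(s)\|^2$, which is at most of order $N(\sigma_b^2\log M+1)$. Neither $|\mathcal{A}|$ nor any enumeration of $\mathcal{A}$ enters this bound.

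The main obstacle is making this last "independent of $|\mathcal{A}|$" statement precise rather than rhetorical. Because $|\mathcal{A}|$ grows with $N$, we must argue that the bound depends on $N$ only polynomially, through the dimension of $\mathcal{A}'$, and not on the number of discrete actions. The explicit convex-hull bound above is the concrete way to do this, and it contrasts with the score-function variance, which scales with $1/\pi(a\mid s)$.
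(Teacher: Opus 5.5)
Your proposal is correct and follows the paper's proof step for step: the same gradient $g_{\mathrm{DBU}}=J_\theta(s)^{\top}(\varphi_\theta(s)-\bar a)$, the same shift-invariance and linearity identity $\operatorname{Cov}[g_{\mathrm{DBU}}\mid s]=J_\theta(s)^{\top}\operatorname{Cov}[\bar a\mid s]J_\theta(s)$, and the same trace inequality, which you verify by eigendecomposition where the paper uses cyclicity and $\operatorname{Tr}(MN)\le\|M\|_{\mathrm{op}}\operatorname{Tr}(N)$; on the factor $2$, the paper defines $g_{\mathrm{DBU}}$ without it, so you must absorb it, since carrying it would turn $G^2$ into $4G^2$. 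The only real difference is the cardinality-independence remark: the paper bounds $\operatorname{Tr}(\operatorname{Cov}[\bar a\mid s])\le\operatorname{diam}(\mathcal{A}')^{2}$, while your convex-hull bound via $\mathbb{E}\max_i\|a''_{(i)}-\varphi_\theta(s)\|^2$ exploits the Gaussian-perturbation structure of the target and so avoids depending on the extent of $\mathcal{A}'$, which for fixed $N$ grows with $|\mathcal{A}|$.
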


\begin{proof}
The DBU stochastic gradient for a single transition is
\begin{equation}
g_{\mathrm{DBU}}\;=\;J_\theta(s)^{\top}\bigl(\varphi_\theta(s)-\bar a\bigr),
\qquad J_\theta(s):=\frac{\partial\,\varphi_\theta(s)}{\partial\theta}.
\label{eq:gdbu-def}
\end{equation}

\paragraph{Conditioning on $s$.}
We compute the covariance over the randomness of the target $\bar a$ at
a fixed state $s$. This is the natural object: it is the per-sample
gradient noise that drives stochastic optimisation, and it isolates the
$|\mathcal{A}|$-dependence question from the (orthogonal) question of
how state distributions affect SGD variance.

Given $s$, $\varphi_\theta(s)$ and $J_\theta(s)$ are deterministic. For
any random vector $X$, deterministic vector $c$, and deterministic
matrix $A$,
$\operatorname{Cov}[A^{\top}(c-X)] = A^{\top}\operatorname{Cov}[X]A$.
Applying this with $X=\bar a$, $c=\varphi_\theta(s)$, $A=J_\theta(s)$
gives the equality in \eqref{eq:gdbu-def}.

\paragraph{Trace bound.}
For PSD $C$ and any matrix $A$,
\begin{equation}
\operatorname{Tr}(A^{\top}C A) \;=\; \operatorname{Tr}(C\,A A^{\top})
\;\le\; \|A A^{\top}\|_{\mathrm{op}}\operatorname{Tr}(C)
\;=\; \|A\|_{2}^{2}\operatorname{Tr}(C),
\label{eq:trace-ineq}
\end{equation}

where $\|\cdot\|_{\mathrm{op}}$ denotes operator norm and we used cyclicity
of trace and $\operatorname{Tr}(MN)\le\|M\|_{\mathrm{op}}\operatorname{Tr}(N)$
for PSD $M,N$. Apply \eqref{eq:trace-ineq} to \eqref{eq:gdbu-def} with
$A=J_\theta(s)$, $C=\operatorname{Cov}[\bar a\mid s]$, and use
$\|J_\theta(s)\|_{2}\le G$ (which follows from $G$-Lipschitzness of
$\varphi_\theta$ in $\theta$). This yields the trace bound $\operatorname{Tr}\!\bigl(\operatorname{Cov}[g_{\mathrm{DBU}}\mid s]\bigr)
\;\le\; G^{2}\operatorname{Tr}\!\bigl(\operatorname{Cov}[\bar a\mid s]\bigr)$.

\paragraph{Why this proof avoids the bias term.}
The covariance route makes no use of the uncentred second moment
$\mathbb{E}\|\varphi_\theta(s)-\bar a\|^{2}$. Had we instead bounded
$\operatorname{Tr}(\operatorname{Cov}[g_{\mathrm{DBU}}\mid s])
\le\mathbb{E}[\|g_{\mathrm{DBU}}\|^2\mid s]\le G^{2}\mathbb{E}[\|\varphi_\theta(s)-\bar a\|^2\mid s]$,
the bias--variance decomposition would have produced the additional
term $\|\varphi_\theta(s)-\mathbb{E}[\bar a\mid s]\|^{2}$, which is in
general non-zero away from the regression fixed point. The covariance
identity above sidesteps this term entirely because covariance is
invariant under deterministic shifts.

\paragraph{Cardinality independence.}
The bound $\operatorname{Tr}\!\bigl(\operatorname{Cov}[g_{\mathrm{DBU}}\mid s]\bigr)
\;\le\; G^{2}\operatorname{Tr}\!\bigl(\operatorname{Cov}[\bar a\mid s]\bigr)$ depends only on $G$ and $\operatorname{Cov}[\bar a\mid s]$.
The target $\bar a$ is the softmax-weighted average of $M$ candidate
actions in $\mathcal{A}'\subseteq\mathbb{R}^{N}$, where $M$ is a fixed
hyperparameter (Sec.~3.2; $M=40$ in our experiments). Hence
$\operatorname{Tr}(\operatorname{Cov}[\bar a\mid s])\le\operatorname{diam}(\mathcal{A}')^{2}$,
which depends on the embedding dimension $N$ but not on $|\mathcal{A}|$.
\end{proof}

\paragraph{Comparison with score-function policy gradients.}
The standard PG estimator
$g_{\mathrm{PG}}=\nabla_\theta\!\log\pi_\theta(a\mid s)\,(Q(s,a)-V(s))$
has per-sample noise that diverges as $|\mathcal{A}|\to\infty$: when
probability mass spreads over many actions, $\pi(a\mid s)\to 0$ and
$\|\nabla\log\pi(a\mid s)\|^{2}=\Omega(|\mathcal{A}|)$.
Theorem~\ref{thm:variance_app} shows DBU exhibits no such scaling
--- its per-sample noise is governed by the geometry of
$\mathcal{A}'\subseteq\mathbb{R}^{N}$ alone.

\paragraph{Note on the unconditional gradient noise.}
Marginalising over $s$ adds an inter-state term
$\operatorname{Cov}_s\!\bigl[\mathbb{E}[g_{\mathrm{DBU}}\mid s]\bigr]
=\operatorname{Cov}_s\!\bigl[J_\theta(s)^{\top}(\varphi_\theta(s)-\mathbb{E}[\bar a\mid s])\bigr]$,
bounded by $G^{2}\operatorname{Var}_s\!\bigl(\varphi_\theta(s)-\mathbb{E}[\bar a\mid s]\bigr)$.
This term is also $O(\operatorname{diam}(\mathcal{A}')^{2})$ and hence
independent of $|\mathcal{A}|$; it is unrelated to the
cardinality-dependence question and is not part of the claim of
Theorem~\ref{thm:variance_app}.

\subsection{Grounding the Latent Lipschitz Assumption}
\label{app:latent_lipschitz}
Assumption~\ref{ass:latent_lipschitz} requires that proximity in the embedding space $\calZ$ implies similarity in the value space $Q$ in structured problems. This assumption is grounded in the smoothness of underlying physical or semantic attributes common in structured large action spaces:

\begin{itemize}[leftmargin=*,noitemsep,topsep=0pt]
\item \textbf{Logistics/Scheduling:} An action vector may represent, e.g., [resource, time, location]. Small perturbations in the latent space correspond to shifting a delivery by five minutes or 100 meters, which results in nearly identical costs ($Q$-values). This implies that the reward landscape is locally consistent with respect to the physical coordinates.
\item \textbf{Recommender Systems:} In item-factorization spaces, items with similar latent vectors (e.g., obtained using \textit{Word2Vec}) share similar user-preference profiles. Thus, the $Q$-value (predicted reward) is inherently smooth over the embedding manifold as the inner product of latent features varies continuously.
\end{itemize}

In practice, this assumption can be fulfilled by the structure inherent in the action space, e.g., in logistics or discretized robotics applications. Alternatively, learned or engineered embeddings $\phi$ can aim to approximate a \textit{bi-Lipschitz} mapping, where $L_1 \|\phi(a) - \phi(a')\| \le |Q(s, a) - Q(s, a')| \le L_2 \|\phi(a) - \phi(a')\|$. This bounded distortion ensures that the latent metric in $\calZ$ is a faithful proxy for functional similarity, justifying the distance-based regression in DBU. Crucially, because the $L_\infty$ metric used in SDN scales with the maximum coordinate-wise deviation, the search volume remains meaningful even when specific attributes (e.g., just 'time' or just 'location') are the primary drivers of $Q$-value variance.

Note that the Recommender environment used in this work is irregularly structured, explicitly not fulfilling this property.

\subsection{Continuity and Approximation Error}
\label{app:lipschitz_continuity}

Given the Latent Lipschitz Assumption, we can formulate that distance in the relaxed action space $\mathcal{A}'$ serves as a proxy for value difference in structured problems. We formalize this by assuming Lipschitz continuity of the action-value function over $\mathcal{A}'$.

\begin{proposition}[Approximation Bound via Lipschitz Continuity]
\label{prop:lipschitz_continuity_app}
Let $Q(s, \cdot): \mathcal{A}' \to \mathbb{R}$ be $L_Q$-Lipschitz continuous with respect to the Euclidean norm $\|\cdot\|_2$. Let $a^\star$ be the optimal discrete action, $\hat{a}$ a continuous proto-action, and $\bar{a}$ a target action. Let $a_{\mathrm{nn}}$ denote the nearest discrete neighbor of $\hat{a}$. Then
\(
Q(s, a^\star) - Q(s, a_{\mathrm{nn}})
\leq L_Q \left( \sqrt{J(\theta)} + \|\bar{a} - a^\star\|_2 + \varepsilon_{\mathrm{round}} \right),
\)
where $J(\theta) = \|\hat{a} - \bar{a}\|_2^2$ and $\varepsilon_{\mathrm{round}} = \|\hat{a} - a_{\mathrm{nn}}\|_2$.
\end{proposition}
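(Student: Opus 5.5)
The plan is a telescoping argument through the triangle inequality, followed by one application of the Lipschitz hypothesis on $Q(s,\cdot)$. The points $a^\star$, $\bar a$, $\hat a$ and $a_{\mathrm{nn}}$ all lie in the relaxed space $\mathcal{A}'$ on which $Q(s,\cdot)$ is defined; I take $\bar a\in\mathcal{A}'$ because it is a convex combination of candidate actions. The only gap to bound is $Q(s,a^\star)-Q(s,a_{\mathrm{nn}})$. This difference is at most $|Q(s,a^\star)-Q(s,a_{\mathrm{nn}})|$, so $L_Q$-Lipschitz continuity gives
\[
Q(s,a^\star)-Q(s,a_{\mathrm{nn}})\le L_Q\,\|a^\star-a_{\mathrm{nn}}\|_2 .
\]
Note that optimality of $a^\star$ already makes the left-hand side nonnegative, but we do not need this.

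Next I split the distance $\|a^\star-a_{\mathrm{nn}}\|_2$ along the path $a^\star\to\bar a\to\hat a\to a_{\mathrm{nn}}$. Applying the triangle inequality twice gives
\[
\|a^\star-a_{\mathrm{nn}}\|_2\le \|a^\star-\bar a\|_2+\|\bar a-\hat a\|_2+\|\hat a-a_{\mathrm{nn}}\|_2 .
\]
Each term then matches a piece of the statement. The middle term is $\sqrt{J(\theta)}$ by the definition $J(\theta)=\|\hat a-\bar a\|_2^2$. The last term is $\varepsilon_{\mathrm{round}}$ by definition, and the first term is $\|\bar a-a^\star\|_2$. Multiplying through by $L_Q$ gives exactly the claimed bound.

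There is no real technical obstacle. The only point needing care is the domain on which the Lipschitz condition is applied, since the intermediate points $\bar a$ and $\hat a$ are generally non-integer. I would state explicitly that the Lipschitz property is used for the pair $(a^\star,a_{\mathrm{nn}})$ inside $\mathcal{A}'$; this is where Assumption~\ref{ass:latent_lipschitz} enters, via the continuous extension $\tilde Q$ that agrees with $Q$ on discrete actions. The intermediate points only appear in the metric triangle inequality, so no $Q$-evaluation at them is needed. Finally, I would remark that since $\bar a$ and $a^\star$ are fixed, the bound decreases monotonically in $J(\theta)$, which is the interpretation used in the main text.
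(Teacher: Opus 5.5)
Your proof is correct. It reaches the bound by a slightly different decomposition than the paper.

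The paper telescopes the \emph{values}. It writes $Q(s,a^\star)-Q(s,a_{\mathrm{nn}})$ as $[Q(s,a^\star)-Q(s,\bar a)]+[Q(s,\bar a)-Q(s,\hat a)]+[Q(s,\hat a)-Q(s,a_{\mathrm{nn}})]$ and applies the Lipschitz bound to each bracket. You instead apply the Lipschitz bound once, to the pair $(a^\star,a_{\mathrm{nn}})$, and then telescope the \emph{distance} with the triangle inequality in $\mathbb{R}^N$.

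Your route needs less. $Q$ is never evaluated at the non-integer points $\hat a$ and $\bar a$. So Lipschitz continuity on the discrete actions suffices, with no continuous extension and no requirement that $\hat a$ or $\bar a$ lie in $\mathcal{A}'$. You also get the sharper intermediate bound $L_Q\|a^\star-a_{\mathrm{nn}}\|_2$ for free.

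The paper's route gives three interpretable value gaps instead: target quality, regression error and rounding error. That reading is reused in Proposition~\ref{prop:improvement_app}. In exchange, it implicitly needs $\hat a,\bar a\in\mathcal{A}'$, which the paper takes for granted.

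Two small remarks on your write-up:
\begin{itemize}
\item The proposition's hypothesis already gives Lipschitz continuity of $Q(s,\cdot)$ on $\mathcal{A}'$ in $\|\cdot\|_2$. So Assumption~\ref{ass:latent_lipschitz} does not need to ``enter''.
\item Your justification that $\bar a\in\mathcal{A}'$ as a convex combination would require $\mathcal{A}'$ to be convex, which is not assumed. As you note yourself, your argument never uses this membership, so you can simply drop the claim.
\end{itemize}
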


\begin{proof}
We decompose the value difference by inserting the intermediate points $\bar a$ and $\hat a$:
\begin{align*}
Q(s,a^\star) - Q(s,a_{\mathrm{nn}})
&= \bigl[ Q(s,a^\star) - Q(s,\bar a) \bigr]
 + \bigl[ Q(s,\bar a) - Q(s,\hat a) \bigr] \\
&\quad + \bigl[ Q(s,\hat a) - Q(s,a_{\mathrm{nn}}) \bigr].
\end{align*}

By Lipschitz continuity of $Q$ w.r.t. $\|\cdot\|_2$,
\begin{align*}
Q(s,a^\star) - Q(s,a_{\mathrm{nn}})
&\le L_Q\, \|a^\star - \bar a\|_2
   + L_Q\, \|\bar a - \hat a\|_2
   + L_Q\, \|\hat a - a_{\mathrm{nn}}\|_2 \\
&= L_Q \left( \|a^\star - \bar a\|_2 + \|\bar a - \hat a\|_2 + \varepsilon_{\mathrm{round}} \right).
\end{align*}

Finally, since $J(\theta) = \|\hat a - \bar a\|_2^2$, we have
\[
\|\bar a - \hat a\|_2 = \sqrt{J(\theta)},
\]
which completes the proof.
\end{proof}

\subsection{Policy Improvement via DBU}
\label{app:improvement}

We now formalize how the Distance-Based Update (DBU) yields policy improvement in structured action spaces. The argument uses the performance-difference identity together with standard bounds that relate policy divergence to the change in discounted state distributions.

\begin{proposition}[Local Improvement under Smoothness]\label{prop:improvement_app}
Assume on a neighbourhood of $\mu_{\mathrm{old}}:=\varphi_{\theta_{\mathrm{old}}}(s)$ satisfying
(i) $Q(s,\cdot)$ is $\beta$-smooth and twice continuously differentiable; (ii) the perturbation scale $\sigma_b$ used to construct
$\bar a$ is small enough that the first-order Taylor expansion of $Q$
in the perturbations is valid, and $\nabla_a Q(s,\mu_{\mathrm{old}})\neq 0$; (iii) the actor learning rate $\alpha$ is sufficiently small. Then a single DBU step on the actor satisfies
\[
Q\bigl(s,\varphi_{\theta_{\mathrm{new}}}(s)\bigr)
\;\ge\;
Q\bigl(s,\varphi_{\theta_{\mathrm{old}}}(s)\bigr)
\;+\;\Omega(\alpha\sigma_b^{2}/\tau_b)\cdot\bigl\|J_\theta(s)^{\top}\nabla_a Q(s,\mu_{\mathrm{old}})\bigr\|_{2}^{2}.
\label{eq:local-improve-cont}
\]
Combined with Lipschitz continuity of $Q$ on $\mathcal{A}'$
(Assumption~\ref{ass:latent_lipschitz}), this implies
\[
Q\bigl(s,a^{\star}\bigr) - Q\bigl(s,a_{\mathrm{exec}}\bigr)
\;\le\; L_Q\bigl(\sqrt{J(\theta_{\mathrm{new}})} + d(\bar a,a^{\star}) + \varepsilon_{\mathrm{round}}\bigr),
\label{eq:local-improve-disc}
\]
i.e., DBU improvement at the proto-action transfers to a bounded
sub-optimality gap for the executed discrete action $a_{\mathrm{exec}}$.
\end{proposition}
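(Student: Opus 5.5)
The plan is to treat the DBU step as a preconditioned gradient ascent step on $Q(s,\cdot)$ at the proto-action, and then finish with a descent-lemma argument. I would proceed in three stages: (a) a first-order expansion of the softmax target $\bar a$; (b) the resulting actor update and the change of the proto-action; (c) a $\beta$-smoothness bound on $Q(s,\varphi_{\theta_{\mathrm{new}}}(s))$. The discrete-gap statement then follows directly from Proposition~\ref{prop:lipschitz_continuity_app}.

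\textbf{Step 1: target expansion.} Write $a''_{(i)} = \mu_{\mathrm{old}} + \sigma_b \xi_i$ with $\xi_i$ approximately standard Gaussian, ignoring rounding and treating it as absorbed into assumption (ii). By (ii), $Q(s,a''_{(i)}) \approx Q(s,\mu_{\mathrm{old}}) + \sigma_b \nabla_a Q^\top \xi_i$. The constant term cancels in the softmax, so the weights are $w_i \propto \exp(\sigma_b g^\top \xi_i/\tau_b)$, where $g:=\nabla_a Q(s,\mu_{\mathrm{old}})$. A further linearization of the exponential (valid when $\sigma_b\|g\|/\tau_b$ is small) gives
\[
\mathbb{E}[\bar a - \mu_{\mathrm{old}}] \approx \sigma_b\,\mathbb{E}\bigl[\xi\, (1+\sigma_b g^\top\xi/\tau_b)\bigr] = (\sigma_b^2/\tau_b)\, g,
\]
using the Gaussian identity $\mathbb{E}[\xi\xi^\top]=I$. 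This holds in expectation or in the large-$M$ limit. Alternatively, I would take the exact Gaussian tilt: reweighting $\mathcal N(\mu,\sigma_b^2 I)$ by $\exp(g^\top a/\tau_b)$ shifts the mean by exactly $\sigma_b^2 g/\tau_b$. I prefer this second route because it avoids the second linearization. Either way, $\bar a - \mu_{\mathrm{old}} = c\, g + r$ with $c = \Theta(\sigma_b^2/\tau_b)$ and a remainder $r$ of higher order.

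\textbf{Step 2: actor step.} The DBU gradient is $J_\theta^\top(\mu_{\mathrm{old}}-\bar a)$, so $\theta_{\mathrm{new}} = \theta_{\mathrm{old}} + \alpha J_\theta^\top(\bar a - \mu_{\mathrm{old}})$. Taylor-expanding the actor in $\theta$ gives
\[
\Delta\mu := \varphi_{\theta_{\mathrm{new}}}(s)-\mu_{\mathrm{old}} = \alpha J_\theta J_\theta^\top(cg + r) + O(\alpha^2).
\]

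\textbf{Step 3: smoothness.} By $\beta$-smoothness (i),
\[
Q(s,\mu_{\mathrm{old}}+\Delta\mu) \ge Q(s,\mu_{\mathrm{old}}) + g^\top\Delta\mu - \tfrac{\beta}{2}\|\Delta\mu\|^2 .
\]
The linear term is $\alpha c\, g^\top J J^\top g = \alpha c\,\|J^\top g\|_2^2$, which is exactly the claimed $\Omega(\alpha\sigma_b^2/\tau_b)\|J_\theta^\top g\|^2$. The quadratic term is $O(\alpha^2)$, and the terms involving $r$ are higher order in $\sigma_b$. Both are absorbed by (iii), choosing $\alpha$ small, together with (ii). Assumption (ii) also guarantees $g\neq 0$. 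If $J^\top g=0$ the claimed bound is trivially satisfied at the level of the remainder, so I would either note this case or restrict the claim to the case $J^\top g\neq0$.

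\textbf{Step 4: discrete gap.} This is Proposition~\ref{prop:lipschitz_continuity_app} applied with $\hat a=\varphi_{\theta_{\mathrm{new}}}(s)$ and $a_{\mathrm{exec}}=a_{\mathrm{nn}}$, with $d=\|\cdot\|_2$. I would observe that this bound does not actually require Steps 1--3; it only uses the Lipschitz property.

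\textbf{Main obstacle.} The hardest part is Step 1 with finite $M$. The softmax-weighted average of $M$ samples is a random, self-normalized estimator, so its mean tilt is only $\Theta(\sigma_b^2/\tau_b)g$ up to $O(1/M)$ bias and fluctuations. In addition, with $\tau_b=0.01$, small $\sigma_b\|g\|/\tau_b$ is an extra requirement. I would state that the result holds in conditional expectation over the candidates, or deterministically in the $M\to\infty$ limit, and I would make the smallness of $\sigma_b\|g\|/\tau_b$ part of assumption (ii).
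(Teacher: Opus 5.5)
Your proposal is correct and follows essentially the same route as the paper, which also works in the population limit and uses the exact Gaussian tilt to get $\bar a-\mu_{\mathrm{old}}=(\sigma_b^2/\tau_b)\nabla_aQ(s,\mu_{\mathrm{old}})+O(\sigma_b^4)$. From there the paper propagates this through $\delta=\alpha J_\theta J_\theta^\top(\bar a-\mu_{\mathrm{old}})+O(\alpha^2)$, applies the $\beta$-smoothness descent lemma with small $\alpha$, and closes with the Lipschitz approximation bound at $\theta_{\mathrm{new}}$; your added caveats (requiring $J_\theta^\top\nabla_aQ\neq0$ rather than just $\nabla_aQ\neq0$, and the finite-$M$ self-normalization bias) are legitimate refinements that the paper's proof glosses over.
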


\begin{proof}
We separate the argument into three steps.

\paragraph{Step 1 -- Alignment of the DBU target with $\nabla_a Q$
(addresses gradient-alignment concern).}
The target $\bar a$ is constructed as the softmax-weighted average of $M$
perturbed candidates $a''_{(i)}=\mu_{\mathrm{old}}+\xi_{(i)}$ with
$\xi_{(i)}\sim\mathcal{N}(0,\sigma_b^{2}I)$ i.i.d.
In the population (continuum) limit and ignoring rounding (which we
re-introduce in Step 3),
\begin{equation}
\bar a - \mu_{\mathrm{old}}
\;=\;
\frac{\int \xi\,\exp\!\bigl(Q(s,\mu_{\mathrm{old}}+\xi)/\tau_b\bigr)\,p_{\sigma_b}(\xi)\,d\xi}
     {\int \exp\!\bigl(Q(s,\mu_{\mathrm{old}}+\xi)/\tau_b\bigr)\,p_{\sigma_b}(\xi)\,d\xi},
\label{eq:abar-integral}
\end{equation}
where $p_{\sigma_b}$ is the density of $\mathcal{N}(0,\sigma_b^{2}I)$.

By assumption (i)--(ii), $Q(s,\mu_{\mathrm{old}}+\xi)
=Q(s,\mu_{\mathrm{old}})+\nabla_a Q(s,\mu_{\mathrm{old}})^{\top}\xi+O(\|\xi\|^{2})$.
Substituting into \eqref{eq:abar-integral}, the leading term is the
expectation of $\xi$ under the tilted Gaussian
$\tilde p(\xi)\propto p_{\sigma_b}(\xi)\exp(\nabla_a Q^{\top}\xi/\tau_b)$,
which is itself Gaussian with mean $\sigma_b^{2}\nabla_a Q/\tau_b$.
Therefore
\begin{equation}
\bar a - \mu_{\mathrm{old}}
\;=\; \tfrac{\sigma_b^{2}}{\tau_b}\,\nabla_a Q(s,\mu_{\mathrm{old}})
\;+\; O(\sigma_b^{4}).
\label{eq:es-alignment}
\end{equation}
This is the evolution-strategies / NES gradient estimator
\citep[cf.][]{wierstra2014nes,salimans2017es}; it gives a constructive
proof that the DBU target points (to leading order in $\sigma_b$) along
the action-space Q-gradient. In particular,
$\nabla_a Q(s,\mu_{\mathrm{old}})^{\top}(\bar a - \mu_{\mathrm{old}})
=\sigma_b^{2}/\tau_b\,\|\nabla_a Q\|^{2}+O(\sigma_b^{4})\ge 0$.

\paragraph{Step 2 -- Local improvement at the proto-action.}
The DBU gradient flow on the actor is
$\theta_{\mathrm{new}} = \theta_{\mathrm{old}} + \alpha J_\theta(s)^{\top}(\bar a - \mu_{\mathrm{old}})$,
which to first order in $\alpha$ moves the proto-action by
\begin{equation}
\delta := \mu_{\mathrm{new}} - \mu_{\mathrm{old}}
= \alpha\,J_\theta(s)\,J_\theta(s)^{\top}(\bar a - \mu_{\mathrm{old}}) + O(\alpha^{2}).
\label{eq:delta}
\end{equation}
Since $J_\theta J_\theta^{\top}\succeq 0$, combining \eqref{eq:es-alignment}
and \eqref{eq:delta} yields
\begin{equation}
\nabla_a Q(s,\mu_{\mathrm{old}})^{\top}\delta
\;=\; \tfrac{\alpha\sigma_b^{2}}{\tau_b}\,
\bigl\|J_\theta(s)^{\top}\nabla_a Q(s,\mu_{\mathrm{old}})\bigr\|_{2}^{2}
+ O(\alpha\sigma_b^{4}+\alpha^{2}).
\label{eq:positive-inner}
\end{equation}

By assumption (i), $Q(s,\cdot)$ is $\beta$-smooth, so the descent lemma gives
\begin{equation}
Q(s,\mu_{\mathrm{new}}) \ge Q(s,\mu_{\mathrm{old}})
+ \nabla_a Q(s,\mu_{\mathrm{old}})^{\top}\delta - \tfrac{\beta}{2}\|\delta\|_{2}^{2}.
\label{eq:descent}
\end{equation}
The first-order term in \eqref{eq:descent} is $\Theta(\alpha)$ by
\eqref{eq:positive-inner}, while the quadratic term is $\Theta(\alpha^{2})$.
By assumption (iii), $\alpha$ is small enough that the first-order term
dominates, yielding \eqref{eq:local-improve-cont}.

\paragraph{Step 3 -- Bridge to discrete execution
(addresses continuous-vs-discrete concern).}
Inequality \eqref{eq:local-improve-cont} bounds the change in the
continuous surrogate $Q(s,\varphi_\theta(s))$, not in the value of the
discretely executed policy. We close this gap using
Proposition~4.4 (Approximation Bound via Lipschitz Continuity), which
applies to any state $s$, target $\bar a$, proto-action
$\hat a=\varphi_\theta(s)$, and nearest-neighbour or SDN-selected
discrete execution action $a_{\mathrm{exec}}$:
\begin{equation}
Q(s,a^{\star}) - Q(s,a_{\mathrm{exec}})
\;\le\; L_Q\Bigl(\sqrt{J(\theta)} + d(\bar a,a^{\star}) + \varepsilon_{\mathrm{round}}\Bigr),
\label{eq:lipschitz-bridge}
\end{equation}
where $J(\theta)=\|\varphi_\theta(s)-\bar a\|^{2}$ and
$\varepsilon_{\mathrm{round}}=\sup_{s,\theta}\|\varphi_\theta(s)-\mathrm{round}(\varphi_\theta(s))\|$.
Applied at $\theta=\theta_{\mathrm{new}}$, \eqref{eq:lipschitz-bridge}
yields \eqref{eq:local-improve-disc}: each of the three terms on the
right of \eqref{eq:lipschitz-bridge} is controlled --- $J(\theta_{\mathrm{new}})$
shrinks under the DBU regression objective, $d(\bar a,a^{\star})$ is
bounded by the perturbation radius and rounding accuracy, and
$\varepsilon_{\mathrm{round}}$ is at most half the lattice spacing.
\end{proof}

\paragraph{Note on the trust region.}
The smoothness step \eqref{eq:descent} uses an $L_{2}$ trust region in
continuous action space ($\|\delta\|$ is small because $\alpha$ is small),
not a KL trust region over the policy. We do not invoke any
$D_{\mathrm{KL}}\propto\|\delta\|^{2}$ approximation, so the
discontinuity introduced by rounding $\varphi_\theta(s)$ to the integer
grid (App.~C) is irrelevant to the smoothness argument: it enters only
through $\varepsilon_{\mathrm{round}}$ in \eqref{eq:lipschitz-bridge},
which is bounded by the lattice spacing and independent of $\theta$.

\paragraph{Note on stochasticity at training time.}
At inference, $a_{\mathrm{exec}}=\arg\max_{a'\in\mathcal{C}_{\mathrm{SDN}}}Q(s,a')$
satisfies \eqref{eq:lipschitz-bridge} directly. At training time the
SDN policy is stochastic (rank-based softmax over Q-values); taking
expectation of \eqref{eq:lipschitz-bridge} over the SDN sampling
distribution yields the same bound with an additional non-negative
exploration gap $\mathbb{E}_{a\sim\pi_{\mathrm{SDN}}}[Q(s,\arg\max)-Q(s,a)]$,
which is controlled by the exploration temperature $\tau_e$ (Sec.~3.1).

\newpage

\section{Hybrid Action Space Analysis}
\label{app:hybrid}

\subsection{Regret Analysis for Hierarchical Policy Decomposition}
\label{app:regret}

\begin{proposition}[Hierarchical Regret Floor]
\label{prop:regret_appendix}
Let a hierarchical policy be defined as $\pi(a|s) = \pi_{\mathrm{high}}(c|s)\,\pi_{\mathrm{low}}(a|s,c)$ over a partition $\{\mathcal{A}_c\}_{c \in \mathcal{C}}$. Let $c^\star$ denote the optimal cluster containing $a^\star$, and define $\epsilon_h = \sum_{c \neq c^\star} \pi_{\mathrm{high}}(c|s)$. Then the regret satisfies
\[
R_{H\text{-}DNC} \ge \epsilon_h \cdot 
\min_{c \neq c^\star}
\left( Q^\star(s, a^\star) - \max_{a \in \mathcal{A}_c} Q^\star(s, a) \right).
\]
\end{proposition}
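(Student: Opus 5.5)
We define regret as $R_{H\text{-}DNC} := Q^\star(s,a^\star) - \mathbb{E}_{a\sim\pi(\cdot|s)}[Q^\star(s,a)]$, the expected shortfall of the hierarchical policy at state $s$ relative to the optimal action. The plan is to decompose this expectation by the high-level cluster choice. Then we lower bound the contribution of every wrong cluster by its best-case gap, and simply drop the contribution of the correct cluster, which is nonnegative.

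\textbf{Step 1 (cluster decomposition).} Since $\{\mathcal{A}_c\}$ is a partition and $\pi(a|s)=\pi_{\mathrm{high}}(c|s)\pi_{\mathrm{low}}(a|s,c)$ with $\pi_{\mathrm{low}}(\cdot|s,c)$ supported on $\mathcal{A}_c$, we write
\[
R_{H\text{-}DNC}=\sum_{c\in\mathcal{C}}\pi_{\mathrm{high}}(c|s)\,\Delta_c,\qquad
\Delta_c:=Q^\star(s,a^\star)-\mathbb{E}_{a\sim\pi_{\mathrm{low}}(\cdot|s,c)}[Q^\star(s,a)],
\]
using $\sum_c\pi_{\mathrm{high}}(c|s)=1$.

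\textbf{Step 2 (sign of each term).} Because $a^\star$ maximizes $Q^\star(s,\cdot)$ over all of $\mathcal{A}$, every $\Delta_c\ge 0$, and in particular $\Delta_{c^\star}\ge 0$. The $c^\star$ term can therefore be discarded, giving $R_{H\text{-}DNC}\ge\sum_{c\ne c^\star}\pi_{\mathrm{high}}(c|s)\Delta_c$.

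\textbf{Step 3 (uniform lower bound on wrong clusters).} For $c\ne c^\star$, every $a\in\mathcal{A}_c$ satisfies $Q^\star(s,a)\le\max_{a'\in\mathcal{A}_c}Q^\star(s,a')$. Hence, for any low-level policy,
\[
\Delta_c\ge Q^\star(s,a^\star)-\max_{a\in\mathcal{A}_c}Q^\star(s,a)\ge\min_{c'\ne c^\star}\Bigl(Q^\star(s,a^\star)-\max_{a\in\mathcal{A}_{c'}}Q^\star(s,a)\Bigr).
\]
Factoring out this constant and summing the weights gives $\epsilon_h$ times the gap, which is the claim.

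The only delicate point is definitional rather than technical. The statement does not fix what $R_{H\text{-}DNC}$ means, and it must be the one-step, per-state expected shortfall under $Q^\star$, so that the decomposition in Step 1 holds. I would state this definition explicitly. I would also note that if $a^\star$ is not unique, $c^\star$ should be the cluster containing the chosen maximizer. In that case the gap may be zero, and the bound holds trivially.
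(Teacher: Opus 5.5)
Your proposal is correct and follows the paper's proof essentially step for step: the paper likewise writes the regret as $\sum_c \pi_{\mathrm{high}}(c|s)\bigl(Q^\star(s,a^\star)-V_c\bigr)$ with $V_c=\mathbb{E}_{a\sim\pi_{\mathrm{low}}(\cdot|s,c)}[Q^\star(s,a)]$, bounds $V_c\le\max_{a\in\mathcal{A}_c}Q^\star(s,a)$ for $c\neq c^\star$, and factors out the minimum gap. You go slightly beyond it by explicitly justifying the dropped nonnegative $c^\star$ term and by making clear that the regret is the one-step shortfall under $Q^\star$, which the paper labels $V^\pi(s)$ without comment.
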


\begin{proof}
Let $V^\pi(s) = \sum_{c} \pi_{\mathrm{high}}(c|s)\, V_c$, where $V_c = \mathbb{E}_{a \sim \pi_{\mathrm{low}}(\cdot|s,c)}[Q^\star(s,a)]$. Let $c^\star$ be the optimal cluster containing $a^\star$, so that $V^\star(s) = Q^\star(s,a^\star)$.

Then
\[
R = V^\star(s) - V^\pi(s)
= \sum_{c} \pi_{\mathrm{high}}(c|s)\left(Q^\star(s,a^\star) - V_c\right).
\]

For $c \neq c^\star$, we have
\[
V_c \le \max_{a \in \mathcal{A}_c} Q^\star(s,a),
\]
hence
\[
Q^\star(s,a^\star) - V_c \ge 
Q^\star(s,a^\star) - \max_{a \in \mathcal{A}_c} Q^\star(s,a).
\]

Therefore,
\[
R \ge \sum_{c \neq c^\star} \pi_{\mathrm{high}}(c|s)
\cdot 
\min_{c' \neq c^\star}
\left(Q^\star(s,a^\star) - \max_{a \in \mathcal{A}_{c'}} Q^\star(s,a)\right),
\]
which yields the stated bound.
\end{proof}

\subsection{Decoupled Optimization in Hybrid Spaces}
\label{app:decoupled}

\begin{lemma}[Decoupled Optimization in Hybrid Action Spaces]
\label{lem:hybrid_appendix}
Let $a=(a_d,a_c)$ with a separable metric $d^2 = d_d^2 + d_c^2$. If the coupling term in the Q-function is bounded by $\varepsilon_{\mathrm{couple}}$, then:
\begin{enumerate}[label=(\alph*)]
  \item If $\varepsilon_{\mathrm{couple}}=0$, DBU performs exact coordinate ascent on the components of $Q$.
  \item If $\varepsilon_{\mathrm{couple}}>0$, DBU performs approximate coordinate ascent with error $\mathcal{O}(\varepsilon_{\mathrm{couple}})$.
\end{enumerate}
\end{lemma}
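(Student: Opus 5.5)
The plan is to make ``coupling term'' precise first, and then reduce both claims to the first-order analysis of Proposition~\ref{prop:improvement_app}, Step~1, applied block by block. Near $\mu_{\mathrm{old}}=(\mu_d,\mu_c)$ we write
\[
Q(s,a_d,a_c)=Q_d(s,a_d)+Q_c(s,a_c)+C(s,a_d,a_c),
\]
and interpret $\varepsilon_{\mathrm{couple}}$ as a bound on the mixed variation of $C$ over the perturbation neighbourhood. For gradient statements the natural quantity is $\sup\|\nabla_{a_d}C\|+\sup\|\nabla_{a_c}C\|\le\varepsilon_{\mathrm{couple}}$. For the value statements in (b) we also want $|C(s,a_d,a_c)-C(s,a_d,a'_c)-C(s,a'_d,a_c)+C(s,a'_d,a'_c)|\le 2\varepsilon_{\mathrm{couple}}$.

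\textbf{Step 1 (separable loss, case (a)).}
Using $d^2=d_d^2+d_c^2$, the DBU loss splits as $\|\hat a_d-\bar a_d\|^2+\|\hat a_c-\bar a_c\|^2$, as in the Approximate Coordinate Ascent remark. The perturbations $\xi=(\xi_d,\xi_c)$ are independent Gaussians. When $C\equiv 0$, the weight $\exp(Q/\tau_b)$ factorizes as $\exp(Q_d/\tau_b)\exp(Q_c/\tau_b)$. In the population limit \eqref{eq:abar-integral}, the tilted density is then a product measure. Hence $\bar a_d-\mu_d$ depends only on $Q_d$, and $\bar a_c-\mu_c$ depends only on $Q_c$. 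Each component target is exactly the single-block softmax target, which by \eqref{eq:es-alignment} equals $\tfrac{\sigma_b^2}{\tau_b}\nabla_{a_d}Q_d+O(\sigma_b^4)$, and analogously for the continuous block.

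Because $Q$ is a sum of per-block functions, ascending each block independently is exact coordinate ascent. Applying Step~2 of Proposition~\ref{prop:improvement_app} to each block separately gives per-block improvement. The improvements add, since there are no cross terms in $Q$.

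\textbf{Step 2 (perturbation argument, case (b)).}
Here I compare the true target $\bar a$ with the target $\bar a^{\mathrm{sep}}$ computed from $Q_d+Q_c$ alone. The tilted Gaussian mean is a smooth functional of the exponent. Its derivative with respect to an additive perturbation $C$ is $\tau_b^{-1}\mathrm{Cov}_{\tilde p}(\xi,C(\mu+\xi))$, whose magnitude is at most $\tau_b^{-1}\sigma_b\cdot\mathrm{osc}(C)$. With the linearization used in \eqref{eq:es-alignment}, the result is concrete:
\[
\bar a-\bar a^{\mathrm{sep}}=\tfrac{\sigma_b^2}{\tau_b}\nabla_a C(s,\mu_{\mathrm{old}})+O(\sigma_b^4).
\]
Consequently $\|\bar a_d-\bar a_d^{\mathrm{sep}}\|$ and $\|\bar a_c-\bar a_c^{\mathrm{sep}}\|$ are $O(\varepsilon_{\mathrm{couple}})$, with the constant absorbing $\sigma_b^2/\tau_b$. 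The actor step $\delta$ in \eqref{eq:delta} is linear in $\bar a-\mu_{\mathrm{old}}$, so it deviates from the exact coordinate-ascent step by $O(\alpha\varepsilon_{\mathrm{couple}})$. Plugging this into the descent lemma \eqref{eq:descent} changes the improvement guarantee by $O(\varepsilon_{\mathrm{couple}})$ relative to case (a). This is the claimed approximate coordinate ascent.

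\textbf{Main obstacle.}
I expect the hard part to be the exact claim in (a) once rounding of the discrete block is reintroduced. Rounding applies only to $a_d$, so the product structure survives, and I would argue that case (a) remains exact blockwise. Outside the Taylor regime, however, the $O(\varepsilon_{\mathrm{couple}})$ constant in (b) depends on $\tau_b$ through the covariance bound. I would state it explicitly as $O(\sigma_b\varepsilon_{\mathrm{couple}}/\tau_b)$ rather than claim uniformity.
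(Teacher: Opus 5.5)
Your route differs from the paper's. The paper argues at the level of values. For (a) it notes that the loss splits blockwise and simply asserts that $\bar a_d,\bar a_c$ approximate component-wise maximizers. For (b) it writes the joint change in $Q$ as the sum of the blockwise changes plus the change in the interaction term $R$, so $|R|\le\varepsilon_{\mathrm{couple}}$ gives a deviation of at most $2\varepsilon_{\mathrm{couple}}$, independent of $\sigma_b,\tau_b$.

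You work instead at the level of the DBU target, reusing the population-limit tilted Gaussian from Step~1 of Proposition~\ref{prop:improvement_app}. For $C\equiv 0$ the tilted measure is a product, and rounding only $a_d$ preserves this, so each block target depends only on its own block of $Q$. For $C\neq 0$, the derivative $\tau_b^{-1}\mathrm{Cov}(\xi,C)$ controls how far the target moves. This justifies the step the paper only asserts, but it comes with regime restrictions you should keep explicit:
\begin{itemize}
\item Exactness in (a) holds only as $M\to\infty$. With finite $M$, the joint weights $e^{Q_d/\tau_b}e^{Q_c/\tau_b}$ let $Q_c$ enter $\bar a_d$.
\item The linearized identity $\bar a-\bar a^{\mathrm{sep}}=\tfrac{\sigma_b^2}{\tau_b}\nabla_a C+O(\sigma_b^4)$ yields $O(\varepsilon_{\mathrm{couple}})$ only modulo remainders. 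Their difference involves second derivatives of $C$, which your gradient-type bound does not control.
\item The covariance route avoids this. Integrate along $Q_d+Q_c+tC$, $t\in[0,1]$, under the paper's sup bound $|C|\le\varepsilon_{\mathrm{couple}}$; this gives the $O(\sigma_b\varepsilon_{\mathrm{couple}}/\tau_b)$ you propose.
\item The mixed-difference hypothesis you list for (b) is never actually used.
\end{itemize}

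One step in your case (a) fails as stated: ``applying Step~2 to each block separately gives per-block improvement.'' Because the actor shares layers across blocks, $J_\theta(s)J_\theta(s)^\top$ is not block diagonal. Write $J_d,J_c$ for the row blocks of $J_\theta(s)$, and set $g_d=\nabla_{a_d}Q_d$, $g_c=\nabla_{a_c}Q_c$. Up to higher-order terms, the step then gives $g_d^\top\delta_d=\tfrac{\alpha\sigma_b^2}{\tau_b}\bigl(\|J_d^\top g_d\|^2+\langle J_d^\top g_d,J_c^\top g_c\rangle\bigr)$. This is negative when, e.g., $J_c^\top g_c=-2J_d^\top g_d\neq 0$. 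Only the joint quantity $\tfrac{\alpha\sigma_b^2}{\tau_b}\|J_d^\top g_d+J_c^\top g_c\|^2\ge 0$ is guaranteed, so one block can lose value while the sum improves.

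The repair is twofold. State the exactness in (a) at the level of the targets and of the loss gradient with respect to the actor outputs. Then obtain the improvement by applying Step~2 to the joint vector. Per-block ascent in proto-action space would require separate heads with disjoint parameters. The paper's own proof glosses over the same point (gradients are ``passed to shared representation layers only subsequently''), so this clarification is needed for the lemma itself, not just your argument. With that repair, your proposal is a more substantive version of the paper's sketch.
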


\begin{proof} 
\textbf{(a) Exact separability:} The DBU loss decomposes as $\mathcal{L}(\theta) = d_d(\hat a_d,\bar a_d)^2 + d_c(\hat a_c,\bar a_c)^2$. Gradients w.r.t. outputs $\hat a_d$ and $\hat a_c$ are independent and passed to shared representation layers only subsequently. Since targets $\bar a_d, \bar a_c$ approximate component-wise maximizers via perturbation, sampling, and Q-based softmax, the update is equivalent to block coordinate ascent.
\textbf{(b) Bounded coupling:} The true joint improvement differs from the sum of component-wise improvements by the interaction term $R(s, a_d, a_c)$. Given $|R| \le \varepsilon_{\mathrm{couple}}$, the deviation from the optimal joint update is bounded by $2\varepsilon_{\mathrm{couple}}$, ensuring stability in hybrid spaces.
\end{proof}

\newpage

\section{Implementation details} \label{app:implementation}

\begin{figure}
    \centering
    \includegraphics[width=\linewidth]{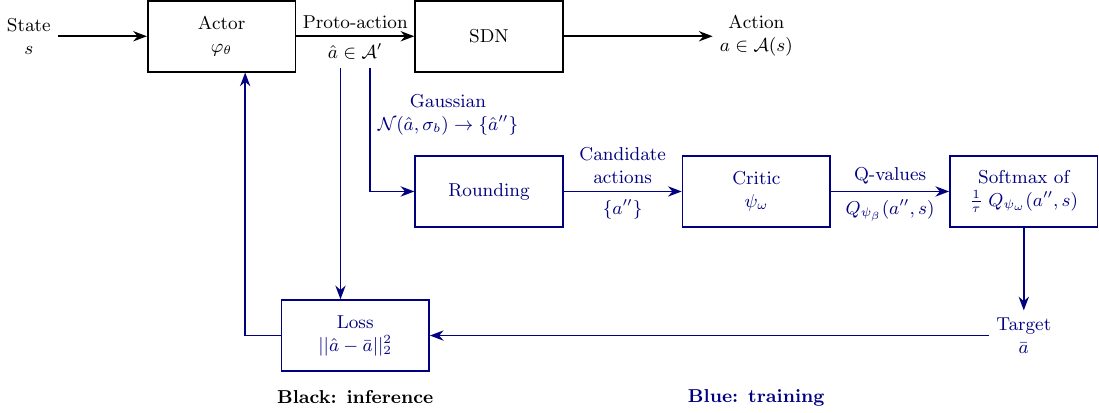}
    \caption{Schematic overview over full algorithm.}
    \label{fig:algorithm}
\end{figure}

We integrate DGRL into a standard off-policy actor-critic DRL paradigm: After initializing our neural networks, we run the algorithm for a fixed number of episodes. Within each episode, we collect experience by applying a policy $\pi(a|s)$ to the environment and storing the transitions in a replay buffer. In the Job Shop Scheduling, Joint Replenishment, and Recommender Environments, we apply a uniform random policy for the first 10\% of training episodes and start updating the networks after 5\% of training episodes. We update the neural networks the following way: every 8 steps, we sample a mini-batch of size 16 from the replay buffer. We update the critic using ordinary TD-errors with clipped double Q-learning and target networks. We update the target Q-networks after every critic update using polyak averaging with an update parameter of 0.02. We update the actor using the DBU update, storing all gradients for joint application after calculating the losses. We visualize the inference and training paradigm of DGRL in Figure~\ref{fig:algorithm}.

We employ feedforward neural networks with three hidden layers, the layer sizes are specified in App.~\ref{app:hyperparams}. The actor receives the state as input and outputs the proto-action as a vector. The critic receives the state and action as input and outputs a single Q-value. In the Maze and Recommender environments, the first layer of the critic only receives the state as an encoding layer. In the Job Shop Scheduling and Joint Replenishment environments, it receives both the state and the action immediately. We scale all state features to values between 0 and 1 before feeding them into the neural networks. In the Mazeworld and Recommender environments, we also apply a decoupled Fourier transformation with Fourier order 3 to improve the state representation, as in \citet{chandak2019} and \citet{akkerman2024dynamic}. The actor uses a tanh activation in its output layer. Since the proto-action is thus restricted to the range $(-1,1)$ as $(\hat{a}_\mathrm{min}, \hat{a}_\mathrm{max})$, we scale it to the action space $[A_\mathrm{min}, A_\mathrm{max}]$ with
\[
    \hat{a}_\mathrm{scaled} = \frac{\hat{a} - \hat{a}_\mathrm{min}}{\hat{a}_\mathrm{max} - \hat{a}_\mathrm{min}} (A_\mathrm{max} - A_\mathrm{min}) + A_\mathrm{min}.
\]
Continuous components of hybrid actions are scaled to the respective limits of the continuous action space accordingly. When given to the critic, we scale actions to the range $[0,1]$ to facilitate compatibility with the state representation using
\[
    a_\mathrm{critic} = \frac{a - A_\mathrm{min}}{A_\mathrm{max} - A_\mathrm{min}}.
\]

In SDN, we create options for the independent coordinate sampling by rounding the proto-action $\hat a$ up and down to the respectively next integer vector. We then increase or decrease the entries of the options vector until the maximum distance is reached, creating an options matrix. The time complexity of this operation is $\mathcal{O}(L)$. Once the matrix is created, we sample $2 \cdot K$ actions from it, treating each dimension independently. We then eliminate duplicates and -- in case of Euclidean or Manhattan Distances -- actions violating the distance constraints. Sampling the double amount of candidate actions ensures usually having sufficiently many candidate actions after these eliminations. Finally, we remove the surplus candidate actions and add the nearest neighbor of $\hat a$ to the set of candidate actions $\{a'\}$. We obtain the nearest neighbor by rounding $\hat a$ to the closest integer action in $\calA$. Adding the nearest neighbor ensures always having a valid action in the neighborhood in case of too restrictive distance constraints, as well as having access to the action most likely estimated by the actor for evaluation by the critic. We give the set of candidate actions $\{a'\}$ to the critic and select an action via sampling during training or choosing the action corresponding to the highest Q-value during testing. We ensure exploration during training in three ways: (i) we sample proto-actions from a multivariate Gaussian distribution $Z \sim \mathcal{N}(\varphi_\theta(s), \sigma_f)$, (ii) we sample candidate actions around $\hat a$, and (iii) we sample $a \in \{a'\}$ based on their Q-values. The first paradigm ensures exploration of different neighborhoods, while the second and third ensures exploration within neighborhoods. The stochastic sampling thereby allows for the efficient construction and exploration of larger neighborhoods than in Cacla or Wolpertinger.

In DBU, we perturb the proto-action $\hat a$ using a Gaussian distribution $Z \sim \mathcal{N}(\varphi_\theta(s), \sigma_b)$ and sample several candidate proto-actions $\hat a''$. We clip each $\hat a''$ to the closest integer action $a'' \in \calA$ using the same rounding function as in SDN. We then estimate the Q-value of each $a''$ and construct the target action $\bar a$ using a softmax over the Q-values. We finally update the actor by minimizing the distance $J(\theta)=\|\varphi_\theta(s) - \bar{a}\|_2^2$. As a safeguard against extreme distances, we implement the update using a Huber loss. Since $\hat a$ and $\bar a$ are constrained between -1 and 1, however, the Huber loss becomes linear only seldomly and the loss amounts to an MSE in most cases.

\newpage

\section{Experiments} \label{app:experiments}

In this section, we detail the experimental setup and used hardware in Section~\ref{sec:hardware}, explain the benchmark algorithms in Section~\ref{sec:baselines}, provide the results for the hyperparameter tuning in Section~\ref{app:hyperparams}, and provide descriptions of the environments in Section~\ref{sec:environments}

\subsection{Experimental Setup and Hardware} \label{sec:hardware}

We conducted our development and experiments on a high-performance cluster with 2.6Ghz CPUs with 56 threads and 64 RAM per node, as well as NVIDIA RTX 4090 GPUs. The algorithms are implemented in Python 3, we use PyTorch to construct neural network architectures \citep{pytorch}. The training times per algorithm are between 2 hours for VAC on the small Maze environment and 24 hours for DNC (SA) on the Joint Replenishment environment.

\subsection{Benchmark Algorithms}\label{sec:baselines}

\begin{table}
\centering
\caption{Comparison of architectural properties and scalability features.}
\begin{small}
\begin{tabularx}{\columnwidth}{l XXX}
\toprule
\textbf{Feature} & \textbf{Wolpertinger} & \textbf{DNC (SA)} & \textbf{DGRL (SDN + DBU)} \\
\midrule
\textbf{Search Strategy} & Local ($k$-NN) & Local (SA) & \textbf{Local (Sampling)} \\
\textbf{Metric Space} & $L_2$ Euclidean & Grid-based & \textbf{Any ($L_\infty$ preferred)} \\
\textbf{Gradient Logic} & DDPG & Policy Gradient & \textbf{Dist. Regression} \\
\textbf{Complexity} & $\mathcal{O}(|\mathcal{A}|)$ (preparation) & $\mathcal{O}(N \cdot Steps)$ & \textbf{$\mathcal{O}(N \cdot K)$ (parallel)} \\
\textbf{Hybrid Support} & No & No & \textbf{Joint Optimization} \\
\textbf{Robustness} & Low (Structure) & Low (Grid) & \textbf{High (Irregular)} \\
\bottomrule
\end{tabularx}
\end{small}
\label{tab:comparison}
\end{table}

We consider five benchmarks for the discrete environments: Cacla, Wolpertinger, DNC (SA), DNC(greedy), and LAR. For the hybrid action spaces, we consider three benchmarks: H-Cacla, H-DNC, and HyAR. Below, we describe each benchmark in more detail. For the complete explanations, we refer to the respective papers. We provide a comparative overview over different algorithmic paradigms in Table~\ref{tab:comparison}.

\paragraph{Vanilla Actor-Critic}~\\
We implement a standard actor-critic approach, referred to as Vanilla Actor-Critic (VAC), as a benchmark method. This benchmark utilizes a categorical policy $\pi(\boldsymbol{a} | \boldsymbol{s})$, representing the probability distribution over discrete actions $\boldsymbol{a}$ given state $\boldsymbol{s}$. The notation $\pi(\boldsymbol{a} | \boldsymbol{s})$ explicitly indicates the distributional nature of the policy. The actor directly outputs discrete actions, eliminating the need for the continuous proto-action generation.

VAC is trained using Advantage Actor-Critic (A2C), utilizing a Q-network as its critic. The critic is trained using on-policy TD-learning.

\paragraph{Cacla}~\\
Cacla was introduced by \citet{hasselt2009} as the first algorithm utilizing the continuous-discrete action mapping paradigm. The actor estimates a continuous proto-action $\hat a$, which is rounded to the closest integer action $a$. No further action search is conducted and the critic is not used for action selection.

Cacla is trained using Advantage Actor-Critic (A2C), utilizing a Q-network as its critic. The critic is trained using on-policy TD-learning.

In hybrid action space environments, we extend Cacla to create a continuous-to-discrete version of Parameterized Action Deep Deterministic Policy Gradients (PADDPG) \citep{Hausknecht2016}. In Hybrid Cacla (H-Cacla), the actor outputs a vector of continuous proto-actions and parameters at the same time. We then choose the discrete action component independently from the continuous parameter using Cacla. The independent action selection paradigm is common in the literature on hybrid action spaces and creates a popular and competitive benchmark \citep[cf.][]{Hausknecht2016, Fan2019}. As proposed by \citet{Hausknecht2016}, H-Cacla is trained using Deep Deterministic Policy Gradients (DDPG).

\paragraph{Wolpertinger}~\\
The Wolpertinger framework by \citet{dulac2015} employs approximate nearest neighbor search to identify discrete actions in $\mathcal{A}$ that are proximal to a continuous action proposal $\boldsymbol{\hat{a}}$. Specifically, the mapping function $h$ retrieves the $k$ closest discrete actions based on Euclidean distance:
\begin{equation*}
    h_k(\boldsymbol{\hat{a}}) = \argmin_{\boldsymbol{a} \in \mathcal{A}^k} \lVert \boldsymbol{a}-\boldsymbol{\hat{a}} \rVert_2.
\end{equation*}
Among these $k$ candidates, the action with the maximum $Q$-value is selected for execution in the environment. This selection strategy mirrors the approximate on-policy reasoning employed in {DNC}, with the key distinction that the critic evaluates candidates only \emph{after} neighborhood generation is complete. Hyperparameter values for $k$ across different experimental settings are documented in Section~\ref{app:hyperparams}.

Wolpertinger is trained using DDPG, utilizing a Q-network as its critic. The critic is trained using off-policy TD-learning.

\paragraph{DNC}~\\
{DNC}, as proposed in \citet{akkerman2024dynamic}, employs a structured search procedure to map continuous actions to discrete ones. First, a continuous action ${\hat{a}}$ from the actor is scaled and rounded to produce a discrete base action ${\bar{a}}$. DNC then generates a local neighborhood $\mathcal{A}'$ around ${\bar{a}}$ by systematically perturbing individual action dimensions using a predefined perturbation matrix. The critic evaluates all neighbors in $\mathcal{A}'$, and a simulated annealing-based search iteratively explores promising neighborhoods to escape local optima. At each iteration, the algorithm accepts neighbors with higher $Q$-values deterministically or accepts worse neighbors probabilistically according to a temperature-controlled acceptance criterion. The process continues until convergence criteria are met, returning the discrete action ${\bar{a}}^*$ with the highest $Q$-value encountered during the search. This approach scales efficiently to large action spaces by limiting neighborhood exploration to a radius of $(d\epsilon)$ around the base action while maintaining performance guarantees under local convexity assumptions.

DNC is trained using A2C, utilizing a Q-network as its critic. The critic is trained using on-policy TD-learning.

In hybrid action space environments, we extend DNC to create a continuous-to-discrete version of hierarchical action selection paradigms common in the literature on hybrid action spaces \citep[cf.][]{Xiong2018, Ma2021}. In such algorithms, an actor usually first estimates the continuous parameter independent of the discrete action. A critic or a second actor uses the state and this parameter as input and selects a discrete action dependent on the continuous parameter. Since we cannot only use the critic for action selection in large hybrid action spaces, we extend DNC to Hybrid DNC (H-DNC), relying on the actor and critic for action selection, as a challenging state-of-the-art benchmark for hybrid action spaces \citep[cf.][]{Ma2021}. In H-DNC, a first actor outputs $\boldsymbol{\hat{a}}_c$. A second actor receives the state and $\boldsymbol{\hat{a}}_c$ as input and estimates $\boldsymbol{\hat{a}}_d$. During DNC's search, the continuous component $\boldsymbol{\hat{a}}_c$ remains fixed and conditions the critic evaluations: $Q(\boldsymbol{s}, (\boldsymbol{a}'_d, \boldsymbol{\hat{a}}_c))$ for each discrete candidate $\boldsymbol{a}'_d$. The final action concatenates the DNC-selected discrete action with the original continuous output: $\boldsymbol{a} = (\boldsymbol{a}^*_d, \boldsymbol{\hat{a}}_c)$. Both actors of H-DNC are trained using A2C, in line with the use of policy gradient algorithms in the literature \citep[cf.][]{Ma2021}.

\paragraph{LAR}~\\
For LAR, as proposed in \citet{chandak2019}, we train a supervised model prior to the RL model to generate low-dimensional embeddings $\boldsymbol{e}'\in \mathbb{R}^{l}$ for each discrete action $\boldsymbol{a}$. This model is trained on a replay buffer containing state-action-next state tuples $(\boldsymbol{s}_t, \boldsymbol{a}_t, \boldsymbol{s}_{t+1})$, collected using a random exploration policy and capped at $6 \times 10^5$ transitions. 

The embedding model minimizes the KL-divergence between the true action distribution $\mathbb{P}(\boldsymbol{a}_t | \boldsymbol{s}_t,\boldsymbol{s}_{t+1})$ and its estimate $\hat{\mathbb{P}}(\boldsymbol{a}_t | \boldsymbol{s}_t,\boldsymbol{s}_{t+1})$, representing the likelihood of action $\boldsymbol{a}_t$ given the state transition. Training proceeds for up to 3000 epochs, though convergence typically occurs earlier. The embedding dimension $l$ is treated as a tunable hyperparameter and need not match the dimensionality of the discrete action space; specific values are reported in Section~\ref{app:hyperparams}.

During reinforcement learning, the continuous policy $\pi$ outputs an embedding vector $\boldsymbol{e}$. We retrieve the nearest learned embedding $\boldsymbol{e}'$ via $L_2$ distance and execute its corresponding discrete action $\boldsymbol{a}$. After each environment step, we perform one gradient update on the embedding model to refine the action representations $\boldsymbol{e}'$.

LAR is trained using A2C, utilizing a Q-network as its critic. The critic is trained using on-policy TD-learning.

\paragraph{HyAR}~\\
HyAR (Hybrid Action Representation) is proposed by \citet{Li2022} to deal with hybrid action spaces. It employs a conditional Variational Autoencoder (VAE) to learn compact latent representations for hybrid discrete-continuous action spaces. The framework maintains a learned embedding table for discrete actions and encodes continuous parameters through the VAE's latent space. The actor outputs a low-dimensional vector $[\boldsymbol{e}, \boldsymbol{z}_x] \in \mathbb{R}^{d_1 + d_2}$, where $\boldsymbol{e}$ is the discrete action embedding and $\boldsymbol{z}_x$ is the continuous parameter embedding. During decoding, $\boldsymbol{e}$ is mapped to a discrete action index $k$ via nearest-neighbor lookup in the embedding table, while $\boldsymbol{z}_x$ is decoded through the VAE conditioned on both state $\boldsymbol{s}$ and the selected discrete action $k$ to produce continuous parameters $\boldsymbol{x}_k$. The VAE is trained with three objectives: reconstruction of continuous parameters, KL-divergence regularization, and dynamics prediction to learn state transitions, ensuring the latent space captures both action semantics and environment dynamics.

HyAR is trained using A2C, utilizing a Q-network as its critic. The critic is trained using on-policy TD-learning.

\subsection{Hyperparameters}\label{app:hyperparams}

In below tables we report the found hyperparameter settings for each method-environment combination. In Tables~\ref{tab:hyperparameters_discrete1} and~\ref{tab:hyperparameters_discrete2} we report the hyperparameters for the discrete environments, and in Table~\ref{tab:hyperparameters_hybrid} for the hybrid environments. For all methods we tune the neural network sizes for both the actor and critic, and the actor and critic learning rates.

\textbf{Practical guide to DGRL hyperparameters.}
Some of DGRL's hyperparameters have little effect on performance: we always use $M=40$ (number of sampled candidates for DBU), $\tau_s=1$, $\tau_e=0.8$, and $\tau_b=0.01$ (temperatures). The neighborhood constraint $L$ and the number of sampled neighbors $K$ can be derived heuristically from the action space dimensionality: we use $L\approx[0.1 \cdot D]$ and $K \approx L \cdot N$. This heuristic ensures that, on average, every dimension of the action manifold is explored at least once within the specified neighborhood during a single update step. This provides a probabilistic lower bound on neighborhood coverage while maintaining the $O(N \cdot K)$ computational advantage that is central to DGRL. Fine-tuning these hyperparameters does not have a substantial effect on performance.

The exploration variance $\sigma_f$ and the perturbation variance $\sigma_b$ should usually have similar magnitudes: for them, we conduct a simple gridsearch between 0.01 and 1.0, ignoring combinations where one would be $>100\:\times$ higher than the other. Empirically, this choice is motivated by the observation that environments have certain local convexity properties of the value function, making the choice of similar standard deviations natural. In practice, this approach limits the tuning effort of DGRL. We tune the NN architectures per environment and the learning rates using ordinary RL gridsearches.

\begin{table}[hbtp]
    \centering
    \def\arraystretch{1.1}
    \caption{Hyperparameters, set of values, and chosen values for the discrete environments. Abbreviations: S: structured, U: unstructured, lr: learning rate, Rcm: Recommender. In case of two entries, a linear decay is applied. Hyperparameters for DNC hold for DNC (SA), DNC (greedy) and Cacla.}
    \label{tab:hyperparameters_discrete1}
    \resizebox{\textwidth}{!}{\begin{tabular}{llccccc}
    \toprule
        & & & \multicolumn{4}{c}{Chosen values} \\
        \cmidrule(r){4-7}
        & Hyperparameters & Set of values & Maze, $5^4$ (S) & Maze, $5^4$ (I) & Maze, $17^{10}$ (S) & Maze, $17^{10}$ (I) \\
    \midrule
        \multirow{4}{*}{\begin{sideways} Overall \end{sideways}}
        & Actor nodes/layer & $\{32,64,128\}$ & $32$ & $32$ & $64$ & $64$ \\
        & Critic nodes/layer & $\{64,128,256\}$ & $64$ & $64$ & $128$ & $128$ \\
        & Actor lr $\alpha_\varphi$ & $\{10^{-4},10^{-5},10^{-6}\}$ & $5 \times 10^{-5}, 10^{-5}$ & $5 \times 10^{-5}, 10^{-5}$ & $10^{-5}, 5 \times 10^{-6}$ & $10^{-5}, 5 \times 10^{-6}$ \\
        & Critic lr $\alpha_\psi$ & $\{10^{-3},10^{-4},10^{-5}\}$ & $10^{-4}, 5 \times 10^{-5}$ & $10^{-4}, 5 \times 10^{-5}$ & $5 \times 10^{-5}, 10^{-5}$ & $5 \times 10^{-5}, 10^{-5}$ \\
    \hline
        \multirow{4}{*}{\begin{sideways} DGRL \end{sideways}}
        & Max distance $L$ & $\{1,2,4,10,20\}$ & $1$ & $1$ & $2$ & $2$ \\
        & Sampled actions $K$ & $\{10,20,40,100\}$ & $10$ & $10$ & $20$ & $20$ \\
        & Variance $\sigma_f$ & $\{0.01, 0.1,0.5,1.0\}$ & $0.5, 0.1$ & $0.5, 0.1$ & $0.5, 0.1$ & $0.5, 0.1$ \\
        & Perturbation var. $\sigma_b$ & $\{0.01,0.1,0.5,1.0\}$ & $0.5, 0.1$ & $0.5, 0.1$ & $0.5, 0.1$ & $0.5, 0.1$ \\
    \hline
        \multirow{4}{*}{\begin{sideways} DNC \end{sideways}}
        & SA search steps & $\{2,10,20,40\}$ & $2$ & $2$ & $2$ & $2$ \\
        & Cooling & $\{0.1,0.25\}$ & $0.25$ & $0.25$ & $0.25$ & $0.25$ \\
        & Accept. cooling & $\{0.1,0.25\}$ & $0.25$ & $0.25$ & $0.25$ & $0.25$ \\
        & Variance $\sigma_f$ & $\{0.01, 0.1,0.5,1.0\}$ & $1.0, 0.1$ & $1.0, 0.1$ & $0.5, 0.1$ & $0.5, 0.1$ \\
    \hline
        \multirow{3}{*}{\begin{sideways} LAR \end{sideways}}
        & Buffer size & $\{2 \times 10^4, 2 \times 10^5\}$ & $2 \times 10^5$ & $2 \times 10^5$ & n/a & n/a \\
        & Embedding lr $\alpha_{em}$ & $\{10^{-5}, 10^{-4}, 10^{-3}\}$ & $10^{-4}$ & $10^{-4}$ & n/a & n/a \\
        & Variance $\sigma_f$ & $\{0.01, 0.1,0.5,1.0\}$ & $0.5, 0.1$ & $0.5, 0.1$ & n/a & n/a \\
    \hline
        \multirow{2}{*}{\begin{sideways} {Wolp.} \end{sideways}}
        & $k$ & $\{10, 50, 100\}$ & $50$ & $50$ & n/a & n/a \\
        & Variance $\sigma_f$ & $\{0.01, 0.1,0.5,1.0\}$ & $0.5, 0.1$ & $0.1$ & n/a & n/a \\
    \bottomrule
    \end{tabular}}
\end{table}

\begin{table}[hbtp]
    \centering
    \def\arraystretch{1.1}
    \caption{Table \ref{tab:hyperparameters_discrete1} continued.}
    \label{tab:hyperparameters_discrete2}
    \resizebox{\textwidth}{!}{\begin{tabular}{llccccc}
    \toprule
        & & & \multicolumn{4}{c}{Chosen values} \\
        \cmidrule(r){4-7}
        & Hyperparameters & Set of values & Job Shop & Inventory & Rcm., $343^1$ & Rcm., $343^{10}$ \\
    \midrule
        \multirow{4}{*}{\begin{sideways} Overall \end{sideways}}
        & Actor nodes/layer & $\{32,64,128\}$ & $32$ & $32$ & $64$ & $128$ \\
        & Critic nodes/layer & $\{64,128,256\}$ & $64$ & $64$ & $128$ & $256$ \\
        & Actor lr $\alpha_\varphi$ & $\{10^{-4},10^{-5},10^{-6}\}$ & $5 \times 10^{-5}, 10^{-5}$ & $5 \times 10^{-5}, 10^{-5}$ & $5 \times 10^{-5}$ & $10^{-5}, 5 \times 10^{-6}$ \\
        & Critic lr $\alpha_\psi$ & $\{10^{-3},10^{-4},10^{-5}\}$ & $10^{-4}, 5 \times 10^{-5}$ & $10^{-4}, 5 \times 10^{-5}$ & $10^{-4}$ & $5 \times 10^{-5}, 10^{-5}$ \\
    \hline
        \multirow{4}{*}{\begin{sideways} DGRL \end{sideways}}
        & Max distance $L$ & $\{1,2,4,10,20\}$ & $1$ & $4$ & $10$ & $10$ \\
        & Sampled actions $K$ & $\{10,20,40,100\}$ & $10$ & $40$ & $10$ & $100$ \\
        & Variance $\sigma_f$ & $\{0.01, 0.1,0.5,1.0\}$ & $0.1, 0.01$ & $0.1, 0.05$ & $0.1$ & $0.1$ \\
        & Perturbation var. $\sigma_b$ & $\{0.01,0.1,0.5,1.0\}$ & $0.05, 0.01$ & $0.5, 0.2$ & $0.5, 0.1$ & $0.5, 0.1$ \\
    \hline
        \multirow{4}{*}{\begin{sideways} DNC \end{sideways}}
        & SA search steps & $\{2,10,20,40\}$ & $20$ & $40$ & $2$ & $10$ \\
        & Cooling & $\{0.1,0.25\}$ & $0.1$ & $0.1$ & $0.25$ & $0.25$ \\
        & Accept. cooling & $\{0.1,0.25\}$ & $0.1$ & $0.1$ & $0.25$ & $0.25$ \\
        & Variance $\sigma_f$ & $\{0.01, 0.1,0.5,1.0\}$ & $1.0, 0.1$ & $0.5, 0.1$ & $1.0, 0.1$ & $0.1$ \\
    \hline
        \multirow{3}{*}{\begin{sideways} LAR \end{sideways}}
        & Buffer size & $\{2 \times 10^4, 2 \times 10^5\}$ & n/a & n/a & $2 \times 10^5$ & n/a \\
        & Embedding lr $\alpha_{em}$ & $\{10^{-5}, 10^{-4}, 10^{-3}\}$ & n/a & n/a & $10^{-4}$ & n/a \\
        & Variance $\sigma_f$ & $\{0.01, 0.1,0.5,1.0\}$ & n/a & n/a & $1.0, 0.1$ & n/a \\
    \hline
        \multirow{2}{*}{\begin{sideways} {Wolp.} \end{sideways}}
        & $k$ & $\{10, 50, 100\}$ & n/a & n/a & $50$ & n/a \\
        & Variance $\sigma_f$ & $\{0.01, 0.1,0.5,1.0\}$ & n/a & n/a & $0.1$ & n/a \\
    \bottomrule
    \end{tabular}}
\end{table}

\begin{table}[hbtp]
	\centering
	\def\arraystretch{1.1}
	\caption{Hyperparameters, set of values, and chosen values for the hybrid environments. Abbreviations: S: structured, U: unstructured, lr: learning rate, Rcm: Recommender. In case of two entries, a linear decay is applied.}
	\label{tab:hyperparameters_hybrid}
    \resizebox{\textwidth}{!}{\begin{tabular}{llccccc}
    \toprule
        & & &  \multicolumn{4}{c}{Chosen values} \\
        \cmidrule(r){4-7}
        & Hyperparameters & Set of values & Maze, $5^4$ (S) & Maze, $17^{10}$, (S) & Rcm., $343^1$ & Rcm., $343^{10}$ \\
    \midrule
        \multirow{4}{*}{\begin{sideways} Overall \end{sideways}}
        & Actor nodes/layer & $\{32,64,128\}$ & $32$ & $64$ & $64$ & $128$ \\
        & Critic nodes/layer & $\{64,128,256\}$ & $64$ & $128$ & $128$ & $256$ \\
        & Actor lr $\alpha_\varphi$ & $\{10^{-4},10^{-5},10^{-6}\}$ & $5 \times 10^{-5}, 10^{-5}$ & $10^{-5}, 5 \times 10^{-6}$ & $5 \times 10^{-5}$ & $10^{-5}, 5 \times 10^{-6}$ \\
        & Critic lr $\alpha_\psi$ & $\{10^{-3},10^{-4},10^{-5}\}$ & $10^{-4}, 5 \times 10^{-5}$ & $5 \times 10^{-5}, 10^{-5}$ & $10^{-4}$ & $5 \times 10^{-5}, 10^{-5}$ \\
    \hline
        \multirow{4}{*}{\begin{sideways} DGRL \end{sideways}}
        & Max distance $L$ & $\{1,2,4,10,20\}$ & $1$ & $2$ & $10$ & $10$ \\
        & Sampled actions $K$ & $\{10,20,40,100\}$ & $10$& $20$ & $10$ & $100$ \\
        & Variance $\sigma_f$ & $\{0.01, 0.1,0.5,1.0\}$ & $0.5, 0.1$ & $0.5, 0.1$ & $0.1$ & $0.05$ \\
        & Perturbation var. $\sigma_b$ & $\{0.01,0.1,0.5,1.0\}$ & $0.5, 0.1$ & $0.5, 0.1$ & $0.1$ & $0.1$ \\
    \hline
        \multirow{4}{*}{\begin{sideways} H-DNC \end{sideways}}
        & SA search steps & $\{2,10,20,40\}$ & $2$ & $2$ & $2$ & $10$ \\
        & Cooling & $\{0.1,0.25\}$ & $0.25$ & $0.25$ & $0.25$ & $0.25$ \\
        & Accept. cooling & $\{0.1,0.25\}$ & $0.25$ & $0.25$ & $0.25$ & $0.25$ \\
        & Variance $\sigma_f$ & $\{0.01, 0.1,0.5,1.0\}$ & $0.5, 0.1$ & $1.0, 0.1$ & $1.0, 0.1$ & $0.1$ \\
    \hline
        \multirow{3}{*}{\begin{sideways} H-Cacla \end{sideways}}
        & \\
        & Variance $\sigma_f$ & $\{0.01, 0.1,0.5,1.0\}$ & $0.5, 0.1$ & $0.1$ & $1.0, 0.1$ & $0.5, 0.1$ \\
        & \\
    \hline
        \multirow{3}{*}{\begin{sideways} HyAR \end{sideways}}
        & Buffer size & $\{2 \times 10^4, 2 \times 10^5\}$ & $2 \times 10^5$ & n/a & $2 \times 10^5$ & n/a \\
        & VAE lr $\alpha_{em}$ & $\{10^{-5}, 10^{-4}, 10^{-3}\}$ & $10^{-3}$ & n/a & $10^{-4}$ & n/a \\
        & Variance $\sigma_f$ & $\{0.01, 0.1,0.5,1.0\}$ & $1.0, 0.1$ & n/a & $0.1$ & n/a \\
    \bottomrule
    \end{tabular}}
\end{table}

\subsection{Environments}\label{sec:environments}

In the following, we provide a description of the four environments and twelve specifications used. For each environment, we explain the environment dynamics, the action space, and the variants used in this work.

\paragraph{Mazeworld}~\\
Mazeworlds are a classic family of environments used in many works on scalable DRL \citep[e.g.,][]{chandak2019, akkerman2024dynamic}. Mazeworlds allow the exact specification of action spaces and transition dynamics, making them a natural choice for algorithmic evaluation.

An agent navigates a 2D rectangular space with continuous states. The space has walls that cannot be crossed and a target area that should be reached. Every step incurs a negative reward of $-0.5$, reaching the target area incurs a positive reward of $+10$. At each step, the agent can choose $d$ out of $n$ actuators. The first actuator encodes the ``do nothing" action, the other actuators encode evenly spaces movement vector of equal length. An action is generated by the linear combination of the vectors chosen by the agent. A maximum step size limits the length of the resulting movement vector, the borders of the space and the walls further restrict agent movements. With 10\% probability, stochastic noise is added to the movement vector. An episode has at most 100 steps and terminates when the target region is reached. Figure~\ref{fig:maze} illustrates the maze environment.

In the structured versions, the actuators follow a natural sequence - i.e., 90\textdegree\, is followed by 180\textdegree\, and so on. In the irregularly structured versions, this sequence except for the ``do nothing" action is shuffled, creating a random sequence of actuators.
The action space of the discrete versions thus has the size $D^N$ and the shape of a vector of length $N$ with each entry ranging between $0$ and $D$.
In the hybrid versions, the movement direction is estimated in the same way as in the discrete versions using a linear combination of actuators. In addition, the actor estimates a continuous parameter. After being clipped to the maximum step length, this parameter determines the step size.
The action space of the hybrid versions thus has the shape of a vector of length $N+1$ with the first $N$ entries ranging between $0$ and $D$ and the last entry ranging between $0$ and the maximum step size.

\begin{figure}[hbtp]
    \centering
    \includegraphics[width=0.23\textwidth]{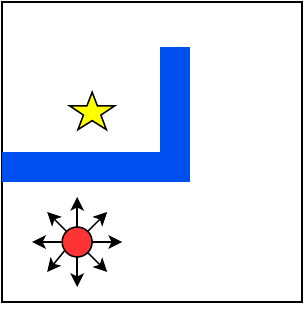}
    \caption{Illustration of the maze environment. Red dot denotes agent, yellow star denotes target, blue bars  denotewalls, black arrows denote actuators.}
    \label{fig:maze}
\end{figure}

\paragraph{Job Shop Scheduling}~\\
The Job Shop Scheduling problem we use was proposed by \citet{akkerman2024dynamic} as a structured problem with a large discrete structured action space. It is based on real-world job-shop scheduling problems common in industrial decision-making.

Each machine $i$ has different energy consumption per job. The machine deterioration factor $w_i \in [0,10]$ increases when utilization exceeds $75\%$ of capacity $D$ (wear from over-use) and decreases when utilization falls below $50\%$ (machine repair). Deterioration and repair are stochastic: deterioration $\Delta w_i \sim \text{Uniform}[0.05, 0.4]$, repair $\Delta w_i \sim \text{Uniform}[-0.4, -0.05]$. The factor $w_i$ corresponds to the energy cost per job, capped at $M_c$. Initially $w_i=1.0$ for all machines. The reward function is:
\begin{equation}
    R_t = \sum_{i=1}^N \left(a_ip - \min\{a_i \cdot (1+w_i), M_c\}\right) - \sigma(a),
\end{equation}
where $a_i$ denotes jobs allocated to machine $i$, $p=2$ is the reward per finished job, $M_c=100$ caps energy consumption, and $\sigma(a)$ penalizes load imbalance via standard deviation. We set $D=10$, results use $N=10$ machines.

\paragraph{Joint Replenishment (Inventory)}~\\
The Joint Replenishment problem is based on real-world inventory management problems, e.g., spare parts for down-time critical assets \citep{zhu_2020} or semiconductors \citep[][] {fleuren_2025} and in retail supply chains \citep{lowery_2026}. The problem we study was proposed by \citet{vanvuchelen2022} and used by \citet{akkerman2024dynamic}.

We set per-item costs as: holding $h_i=1$, backorder $b_i=19$, ordering $o_i=10$, and joint ordering $O=75$. Order-up-to levels range from $[0,66]$, and demand follows a Poisson distribution with $\lambda_i \in \{10, 20\}$ (half of items each). Initial inventory is $25$ per item, and episodes run for $100$ timesteps. The reward function is:
\begin{equation}
   R_t = - \sum_{i=1}^N \left( h_{i}I_{i,t}^+ + b_iI_{i,t}^- + o_i a_i \right) + O\mathds{1}_{\{\sum_{i=1}^N{q^o_{i,t}}>0\}},
\end{equation}
where $I_{i,t}^+$ and $I_{i,t}^-$ denote positive and negative inventory levels, and $\mathds{1}_{\{\sum_{i=1}^N{q^o_{i,t}}>0\}}$ indicates if any item is ordered (if order quantity $q^o_{i,t} > 0$).

\paragraph{Recommender}~\\
Recommender systems are classical environments with large irregularly structured action spaces \citep[e.g.,][]{dulac2015, chandak2019}. We utilize the MovieLens 25M dataset, which contains metadata and user preferences on movies \citep{Harper2015}. Based on this data, we construct a simulation of user behavior when interacting with a movie platform. We construct a feature vector per movie by vectorizing the list of movies based on their genre description using a combined \textit{term-frequency} (tf) and \textit{inverse-document-frequency} (idf) vectorizer \citep[cf.][]{Pedregosa2011}. As the resulting matrix contains several duplicates, i.e., movies with the exact same combination of features, we retrieve only unique feature vectors. We base the conditional probability of a customer picking movie $j$ if the last movie picked was $i$ on the cosine similarity of both movies' feature vectors. Cosine similarity \(S_{ij}\) between two movies \(i\) and \(j\) is computed as
\begin{equation}
S_{ij} = \frac{\boldsymbol{T}^{\mathrm{tf-idf}}_i \cdot \boldsymbol{T}^{\mathrm{tf-idf}}_j}{||\boldsymbol{T}^{\mathrm{tf-idf}}_i||\,||\boldsymbol{T}^{\mathrm{tf-idf}}_j||}\enspace.
\end{equation}
We then obtain a probability $\Tilde{P}_{ij}$ of picking recommended movie $j$ -- when the last picked movie was $i$ -- by applying a sigmoid function to each $S_{ij}$, yielding
\begin{equation}
\Tilde{P}_{ij} = \frac{1}{1+\exp{(-5\cdot S_{ij})}}\enspace. 
\end{equation}

In the discrete versions, based on similarity to the previously watched movie, the user can select one of the recommended movies or a different movie. If the user selects one of the recommended movies, the probability of the user leaving the system after watching the movie is $10\%$. In case of selecting a non-recommended movie, the probability is $20\%$. This corresponds to the setting studied in \citet{dulac2015} and simulates user patience. The agent then collects a movie-specific reward derived from the dataset. An episode has at most $100$ steps and terminates when the customer leaves the site.

In the hybrid versions, a price for each recommended movie is estimated. The user chooses one of the recommended movies or another movie based on a Multinomial logit (MNL) customer choice model with movie similarities and prices as input. The MNL model works as follows: the utility for each recommended movie $j$ is given by
\begin{equation}
U_j = \kappa_{\text{similarity}} \cdot S_{ij} - \kappa_{\text{price}} \cdot p_j + \varepsilon_j,
\end{equation}
where $\kappa_{\text{similarity}}$ and $\kappa_{\text{price}}$ are preference parameters for similarity and price sensitivity respectively, $S_{ij}$ is the cosine similarity between the  last watched movie $i$ and movie $j$, $p_j$ is the price of movie $j$, and $\varepsilon_j$ is an i.i.d. standard Gumbel noise term, as commonly used for MNL models \citep[cf.][]{book_train_2009}. An outside option (selecting a non-recommended movie) has utility
\begin{equation}
U_0 = \kappa_{\text{outside}} + \varepsilon_0,
\end{equation}
where $\kappa_{\text{outside}}$ captures the benchmark utility of browsing and $\varepsilon_0$ is also an i.i.d. standard Gumbel error term. The user selects the option with the highest utility. The parameters $\kappa_{\text{similarity}}=2$, $\kappa_{\text{price}}=3$, and $\kappa_{\text{outside}}=0$ are tuned to ensure that users exhibit price-sensitive behavior while avoiding overly prescriptive effects of the prices. If a recommended movie is chosen, the reward is the movie-specific reward plus the price. If a non-recommended movie is chosen, the reward is only the movie-specific reward. Termination probabilities and episode lengths are analogous to the discrete case.

The action space of the discrete versions thus has a size of $D^N$ and the shape of a vector of length $N$ with each entry ranging between $0$ and $D$. The action space of the hybrid versions has the shape of a vector of length $2 \cdot N$ with the first $N$ entries ranging between $0$ and $D$ and the last $N$ entries ranging between $0$ and the maximum price.

The environment is initialized by a user selecting a random movie. At each step, the agent recommends $N$ out of $D$ movies to the user.

\newpage

\section{Auxiliary Results} \label{app:results}

In this section, we present complementary results. First, we conduct an analysis of different distance metrics used in DBU in Section~\ref{sec:dist_metr}, next, we conduct an ablation study for DGRL in Section~\ref{sec:ablation}, and we end with numerical results for all tested algorithms in Section~\ref{sec:table_results}, as complement to the results in the main text.

\subsection{Results for distance metrics}\label{sec:dist_metr}

\begin{figure}[hbtp]
    \centering
    \begin{minipage}{0.4\linewidth}
        \centering
        \includegraphics[width=\linewidth]{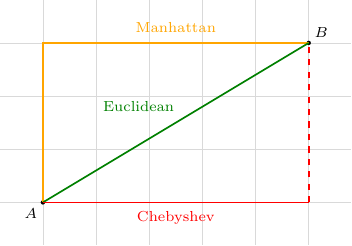}
    \end{minipage}
    \begin{minipage}{0.38\linewidth}
        \includegraphics[width=\linewidth]{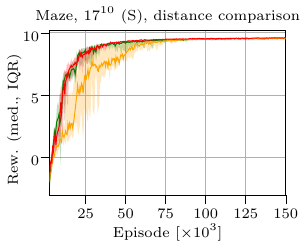}
    \end{minipage} \\
    \includegraphics[width=0.85\textwidth]{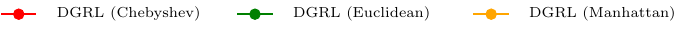}
    \caption{Left: schematic representation of Chebyshev, Euclidean, and Manhattan Distance. Right: Comparison of different distance metrics for DBU, median performance over 10 random training seeds.}
    \label{fig:chebyshev}
\end{figure}

We compare DGRL using a Chebyshev Distance, a Euclidean Distance, and a Manhattan Distance as neighborhood constraint on the large structured Maze environment. We display a schematic representation of the different distance metrics and the results of out tests in Figure~\ref{fig:chebyshev}. For the experiments, we re-tuned the distance parameters of DGRL using the alternative distance metrics to allow for a fair comparison. For the Euclidean Distance, we chose $L=6$ and $K=60$. For the Manhattan Distance, we chose $L=8$ and $K=80$, while we used $L=2$ and $K=20$ for the Chebyshev distance.

We observe similar performance across distance metrics, with all variations of DGRL using all random seeds converging to the same policy. The convergence using the Chebyshev distance is most stable, while DGRL using a Euclidean Distance has some instability in the beginning and DGRL using a Manhattan Distance performing worse than the other variants at the start. We can find the empirical reason for this observation in the neighborhood construction: while the Chebyshev Distance constraints every dimension $N \in \{1,\dots,N\}$ independently, the Euclidean and Manhattan Distances constrain the $N$-dimensional vector. To allow for a reasonable number of neighbors to be evaluated, we therefore need a larger neighborhood constraint $L$, accounting for the larger distances between $\hat a$ and $a'$. This leads to cases in which the distance in one dimension is substantially larger than in the other dimensions of the action vector, extending the trust region around the proto-action. This can lead to instability during training and delay convergence.

In addition, the computational efficiency of SDN is decreased when using a Euclidean or Manhattan Distance. Since SDN scales with $\mathcal{O}(N \cdot K)$, requiring a larger $K$ leads to higher computational costs, as does the additionally required check of the maximum distance to the proto-action during sampling.

\subsection{Results of ablation study}\label{sec:ablation}

\begin{figure}[th]
    \includegraphics[width=0.345\textwidth]{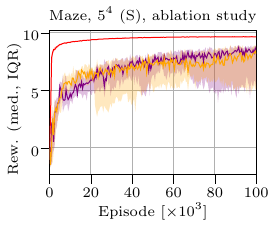}%
    \hfill%
    \includegraphics[width=0.315\textwidth]{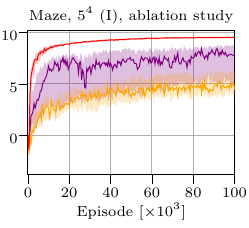}%
    \hfill%
    \includegraphics[width=0.325\textwidth]{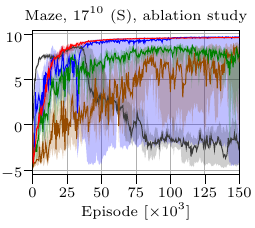}%
    \hfill%
    \par\smallskip
    \includegraphics[width=\textwidth]{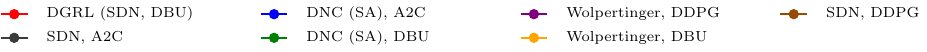}
    \caption{Results for ablation studies in small structured, small irregularly structured, and large structured Maze environment. All results report median rewards over ten random seeds.}
    \label{fig:results_ablation}
\end{figure}

Since DGRL encompasses the two novel components SDN and DBU, we perform an ablation study to differentiate the contribution of each component. We report the results of this ablation study in Figure~\ref{fig:results_ablation}. We investigate the contributions of SDN and DBU by testing Wolpertinger with DDPG and DBU, DNC with A2C and DBU, and SDN with DBU, A2C, and DDPG in the Maze environment. We use the large structured Mazeworld for these experiments, except for Wolpertinger, where scalability problems prevent its application. We therefore test the versions of Wolpertinger on the small structured and irregularly structured Maze environments.

We observe both SDN and DBU contributing to the stability and performance of DGRL: when comparing different loss functions for SDN, DDPG displays unstable training and converges to a lower final performance compared to DBU, while A2C displays high variance and very low final performance, the training essentially crashes. We can attribute the decreased stability of DDPG to its local gradient ascent nature -- while DBU samples candidate actions globally, DDPG searches for the steepest local gradient ascent to update the actor, thus being more likely to reach local optima. In addition, DDPG requires the critic to output meaningful Q-values on the relaxed action space $\calA'$, while DBU maps all actions to $\calA$ before giving them to the critic. Since the critic is trained only on feasible actions $a \in \calA$, thus not needing to account for the neighborhood mapping, this challenges the critic. For A2C, the variance of policy gradient algorithms in large action spaces hinders convergence, especially given the mismatch between the parameterized probabilities of proto-actions $\hat a$ depending on states and the probabilities of actions $a$ depending on states and mapping functions. Furthermore, the on-policy algorithm cannot recover after bad updates and thus converges to a low performance.

Considering mapping functions, we observe both Wolpertinger and DNC using DBU performing worse than DGRL. In case of Wolpertinger, the performance difference is the same as when using DDPG in the structured environment, but increases in the irregularly structured environment. In case of DNC, the performance gap similarly increases when using DBU. This highlights the unique link between SDN and DBU: since SDN employs coordinate-independent stochastic sampling in the neighborhood of $\hat a$, an algorithm minimizing per-dimension distances to high-value actions is optimal to update the actor. In contrast, Wolpertinger explores the direct neighborhood of $\hat a$ exhaustively, but lacks exploration of its larger surrounding area in the action space. Additionally, it learns a deterministic policy, which can lead to disadvantages given the stochastic transition kernel of the Maze and other environments. DNC explores the neighborhood in a grid-like fashion. As seen in Section~\ref{sec:exp_setup} and here, this search along grid lines performs well in structured environments, but fails in absence of structure. Furthermore, the Hamming-constrained sequential grid search of DNC can lead individual dimensions of the action vector far away from $\hat a$, which the Chebyshev constraints of SDN generally avoid, thereby stabilizing convergence and performance.

\subsection{Numerical results}\label{sec:table_results}

We provide numerical results for all tested algorithms in all discrete environments in Table~\ref{tab:disc_res1} and Table\ref{tab:disc_res2}, and for all hybrid environments in Table~\ref{tab:hybr_res}. We display the peak mean and median performance over ten random seeds, as well as the standard deviation of the peak performances. Furthermore, we display the percentage differences of medians against DNC (H-DNC) and Cacla (H-Cacla) for the discrete (hybrid) environments.

\begin{table}
\centering
\caption{Numerical results for discrete environments. Statistical metrics are calculated over 10 random seeds. Means and medians are maxima over the entire training time. Percentage differences compared to DNC and Cacla are calculated using medians.}
\footnotesize
\begin{tabularx}{\textwidth}{Xrrrrrr}
\toprule
\midrule
Algorithm & Mean & Median & Std. & vs. DNC (\%) & vs. Cacla (\%) & Num Seeds \\
\midrule
\midrule
\multicolumn{7}{c}{Maze, $5^4$, structured, discrete} \\
\midrule
{DGRL (SDN, DBU)} & 9.67 & 9.67 & 0.01 & 0 & 14390 & 10 \\
{DNC (SA), A2C} & 9.74 & 9.75 & 0.03 & 0 & 14507 & 10 \\
{DNC (greedy), A2C} & 9.35 & 9.36 & 0.17 & -3 & 13927 & 10 \\
{Wolpertinger, DDPG} & 8.24 & 8.91 & 1.29 & -8 & 13248 & 10 \\
{Wolpertinger, DBU} & 8.31 & 8.63 & 1.13 & -11 & 12822 & 10 \\
{Cacla, A2C} & 1.36 & 0.07 & 2.31 & -99 & 0 & 10 \\
{VAC, A2C} & 0.99 & 0.98 & 0.64 & -89 & 1374 & 10 \\
{LAR, A2C} & -0.01 & -0.02 & 0.03 & -100 & -136 & 10 \\
\midrule
\midrule
\multicolumn{7}{c}{Maze, $5^4$, unstructured, discrete} \\
\midrule
{DGRL (SDN, DBU)} & 9.57 & 9.56 & 0.07 & -1 & 1569 & 10 \\
{DNC (SA), A2C} & 8.99 & 9.70 & 1.78 & 0 & 1594 & 10 \\
{DNC (greedy), A2C} & 8.58 & 9.27 & 1.67 & -4 & 1518 & 10 \\
{Wolpertinger, DDPG} & 7.27 & 8.69 & 2.23 & -10 & 1416 & 10 \\
{Wolpertinger, DBU} & 5.96 & 5.54 & 1.15 & -42 & 867 & 10 \\
{Cacla, A2C} & 0.96 & 0.57 & 1.22 & -94 & 0 & 10 \\
{VAC, A2C} & 1.62 & 1.66 & 0.55 & -82 & 190 & 10 \\
{LAR, A2C} & 0.00 & -0.02 & 0.04 & -100 & -104 & 10 \\
\midrule
\midrule
\multicolumn{7}{c}{Maze, $17^{10}$, structured, discrete} \\
\midrule
{DGRL (SDN, DBU)} & 9.61 & 9.63 & 0.04 & 0 & 19548 & 10 \\
{DNC (SA), A2C} & 8.79 & 9.69 & 1.33 & 0 & 19668 & 10 \\
{DNC (greedy), A2C} & 6.75 & 7.28 & 2.72 & -24 & 14804 & 10 \\
{Cacla, A2C} & 1.15 & -0.05 & 2.01 & -100 & 0 & 10 \\
{SDN, A2C} & 8.65 & 8.73 & 0.42 & -9 & 17740 & 10 \\
{SDN, DDPG} & 7.88 & 8.79 & 2.33 & -9 & 17863 & 10 \\
{DNC (SA), DBU} & 9.16 & 9.22 & 0.16 & -4 & 18730 & 10 \\
{SDN (EU), DBU} & 9.57 & 9.60 & 0.08 & 0 & 19494 & 10 \\
{SDN (MH), DBU} & 9.56 & 9.61 & 0.14 & 0 & 19505 & 10 \\
\midrule
\midrule
\multicolumn{7}{c}{Maze, $17^{10}$, unstructured, discrete} \\
\midrule
{DGRL (SDN, DBU)} & 9.25 & 9.29 & 0.18 & 66 & 2290 & 10 \\
{DNC (SA), A2C} & 5.50 & 5.59 & 3.28 & 0 & 1338 & 10 \\
{DNC (greedy), A2C} & 5.26 & 4.86 & 3.27 & -13 & 1150 & 10 \\
{Cacla, A2C} & 1.07 & 0.39 & 1.47 & -93 & 0 & 10 \\
\midrule
\midrule
\multicolumn{7}{c}{Job Shop Scheduling, $10^{10}$} \\
\midrule
{DGRL (SDN, DBU)} & 4027.51 & 4033.32 & 227.08 & 692782 & 494 & 10 \\
{DNC (SA), A2C} & -661.19 & 0.58 & 1985.72 & 0 & -99 & 10 \\
{DNC (greedy), A2C} & -761.06 & 1711.94 & 4839.12 & 293993 & 152 & 10 \\
{Cacla, A2C} & 492.07 & 678.30 & 3387.28 & 116424 & 0 & 10 \\
\midrule
\midrule
\multicolumn{7}{c}{Joint Replenishment, $67^{20}$} \\
\midrule
{DGRL (SDN, DBU)} & -335780.23 & -328335.73 & 22597.56 & 9 & 6 & 10 \\
{DNC (SA), A2C} & -360626.61 & -362375.81 & 11456.72 & 0 & -3 & 10 \\
{DNC (greedy), A2C} & -355996.16 & -349061.74 & 19231.48 & 3 & 0 & 10 \\
{Cacla, A2C} & -350049.75 & -350824.13 & 6771.59 & 3 & 0 & 10 \\
\midrule
\bottomrule
\end{tabularx}
\label{tab:disc_res1}
\end{table}

\begin{table}
\centering
\caption{Table~\ref{tab:disc_res1} continued.}
\footnotesize
\begin{tabularx}{\textwidth}{Xrrrrrr}
\toprule
\midrule
Algorithm & Mean & Median & Std. & vs. DNC (\%) & vs. Cacla (\%) & Num Seeds \\
\midrule
\multicolumn{7}{c}{Recommender, $343^1$, discrete} \\
\midrule
{DGRL (SDN, DBU)} & 2009.88 & 2006.62 & 42.18 & 31 & 60 & 10 \\
{DNC (SA), A2C} & 1534.94 & 1524.49 & 265.15 & 0 & 21 & 10 \\
{DNC (greedy), A2C} & 1587.59 & 1637.68 & 273.37 & 7 & 30 & 10 \\
{Wolpertinger, DDPG} & 460.38 & 444.65 & 29.42 & -70 & -64 & 10 \\
{Cacla, A2C} & 1370.89 & 1253.92 & 320.12 & -17 & 0 & 10 \\
{VAC, A2C} & 826.41 & 876.02 & 152.60 & -42 & -30 & 10 \\
{LAR, A2C} & 830.31 & 843.62 & 231.14 & -44 & -32 & 10 \\
\midrule
\midrule
\multicolumn{7}{c}{Recommender, $343^{10}$, discrete} \\
\midrule
{DGRL (SDN, DBU)} & 1937.61 & 1941.89 & 53.86 & 147 & 164 & 10 \\
{DNC (SA), A2C} & 802.80 & 783.96 & 49.54 & 0 & 6 & 10 \\
{DNC (greedy), A2C} & 730.46 & 730.52 & 15.86 & -6 & 0 & 10 \\
{Cacla, A2C} & 730.68 & 732.85 & 15.27 & -6 & 0 & 10 \\
\midrule
\bottomrule
\end{tabularx}
\label{tab:disc_res2}
\end{table}

\begin{table}
\centering
\caption{Numerical results for hybrid environments. Statistical metrics are calculated over 10 random seeds. Means and medians are maxima over the entire training time. Percentage differences compared to DNC and Cacla are calculated using medians.}
\footnotesize
\begin{tabularx}{\textwidth}{Xrrrrrr}
\toprule
\midrule
Algorithm & Mean & Median & Std. & vs. H-DNC (\%) & vs. H-Cacla (\%) & Num Seeds \\
\midrule
\midrule
\multicolumn{7}{c}{Maze, $5^4$, hybrid} \\
\midrule
{DGRL (SDN, DBU)} & 9.71 & 9.71 & 0.01 & 0 & 0 & 10 \\
{Hybrid DNC, H-A2C} & 9.78 & 9.79 & 0.02 & 0 & 1 & 10 \\
{Hybrid Cacla, DDPG} & 9.57 & 9.66 & 0.17 & -1 & 0 & 10 \\
{HyAR, A2C} & 0.27 & 0.25 & 0.23 & -97 & -97 & 10 \\
\midrule
\midrule
\multicolumn{7}{c}{Maze, $17^{10}$, hybrid} \\
\midrule
{DGRL (SDN, DBU)} & 9.72 & 9.72 & 0.01 & 0 & 106 & 10 \\
{Hybrid DNC, H-A2C} & 9.59 & 9.75 & 0.27 & 0 & 107 & 10 \\
{Hybrid Cacla, DDPG} & 4.39 & 4.71 & 3.77 & -51 & 0 & 10 \\
\midrule
\midrule
\multicolumn{7}{c}{Recommender, $343^1$, hybrid} \\
\midrule
{DGRL (SDN, DBU)} & 1579.64 & 1577.34 & 23.34 & 11 & 17 & 10 \\
{Hybrid DNC, H-A2C} & 1362.25 & 1412.97 & 144.42 & 0 & 5 & 10 \\
{Hybrid Cacla, DDPG} & 1335.42 & 1342.34 & 34.54 & -4 & 0 & 10 \\
{HyAR, A2C} & 607.89 & 601.09 & 24.74 & -57 & -55 & 10 \\
\midrule
\midrule
\multicolumn{7}{c}{Recommender, $343^{10}$, hybrid} \\
\midrule
{DGRL (SDN, DBU)} & 2262.56 & 2258.83 & 60.81 & 129 & 29 & 10 \\
{Hybrid DNC, H-A2C} & 1011.78 & 982.57 & 112.17 & 0 & -43 & 10 \\
{Hybrid Cacla, DDPG} & 1740.56 & 1748.69 & 43.23 & 77 & 0 & 10 \\
\midrule
\bottomrule
\end{tabularx}
\label{tab:hybr_res}
\end{table}


\end{document}